\newcommand{\vect}[1]{\textbf{#1}}
\begin{document}
\title{Spline Path Following for Redundant Mechanical Systems}
\urldef{\nielsen}\url{cnielsen@uwaterloo.ca}
\urldef{\rg}\url{rjgill@uwaterloo.ca}
\urldef{\dk}\url{dana.kulic@uwaterloo.ca}
\urldef{\combined}\url{{rjgill; dana.kulic; cnielsen}@uwaterloo.ca}
\author{R.J.~Gill \qquad D. Kuli\'{c} \qquad C.~Nielsen
  \thanks{Supported by the Natural Sciences and Engineering Research
    Council of Canada (NSERC).}
  \thanks{The authors are with the Department of Electrical and
    Computer Engineering, University of Waterloo, 200 University
    Avenue West, Waterloo, Ontario, Canada, N2L 3G1. \combined}
}

\markboth{Submitted for Review}%
{Gill, Kulic, Nielsen : Spline Path Following for Redundant Euler-Lagrange
  Systems}

\maketitle

\begin{abstract}
  Path following controllers make the output of a control system
  approach and traverse a pre-specified path with no \emph{apriori}
  time parametrization. In this paper we present a method for path
  following control design applicable to framed curves generated by
  splines in the workspace of kinematically redundant mechanical
  systems. The class of admissible paths includes self-intersecting
  curves. Kinematic redundancies are resolved by designing controllers
  that solve a suitably defined constrained quadratic optimization
  problem. By employing partial feedback linearization, the proposed
  path following controllers have a clear physical
  meaning. 
  The approach is experimentally verified on a 4-degree-of-freedom
  (4-DOF) manipulator with a combination of revolute and linear
  actuated links and significant model uncertainty.
\end{abstract}


\section{Introduction}




The problem of designing feedback control laws that make the output of a
control system follow a desired curve in its workspace can be broadly
classified as either a trajectory tracking or path following
problem. As previously suggested in \cite{hauser1995maneuver}, trajectory tracking consists of
tracking a curve with an assigned time parameterization. In other
words, besides the shape of the curve, the end-effector's motion has a
desired velocity and acceleration profile. This
approach may not be suitable for certain applications as it may limit
the attainable performance, be unnecessarily demanding or even
infeasible. For example, the primary objective of a robotic
  manipulator tracing a contour in its output space in
  welding~\cite{sicard1988approach} or
  cutting~\cite{erkorkmaz2001high} applications is to accurately
  traverse a path. The trajectory tracking approach parametrizes the
  desired motion along the path and a controller is designed to
  asymptotically stabilize the tracking error to zero. When the
  tracking error is non-zero, the controller may drive the
  end-effector to ``cut through'' an obstacle in order to reduce the
  tracking error and thereby not precisely follow the contour.

On the other hand, path following or manoeuvre regulation controllers, as previously defined in \cite{hauser1995maneuver}, drive a system's output
to a desired path and make the output traverse the path without a
pre-specified time parametrization. Thus, a path following controller
stabilizes all trajectories that cause the system's output to traverse
the desired path, whereas trajectory tracking controllers do this for
a single trajectory. In general, path following results in smoother
convergence to the desired path compared to trajectory tracking, and
the control signals are less likely to
saturate~\cite{Lapierre20071734}. It is also shown in~\cite{MilMid08}
for the linear case, and~\cite{aguiar2005path} for the nonlinear case,
that for non-minimum phase systems, path following controllers remove
performance limitations compared to trajectory tracking controllers,
which have a lower bound on the achievable $\mathcal{L}_2$-norm of the
tracking error. 

There are several approaches for implementing path following. The
papers \cite{hauser1995maneuver,skjetne2004robust,kant1986toward} project the current system state onto the desired
path in real time. Then a tracking controller is used to stabilize
this generated reference point. Contour error mitigation controllers
have also been proposed in the machining community in addition to a
trajectory tracking controller in order to drive the machine tool
towards a 
path~\cite{chiu2001contouring,chuang1991cross,yang2007contour}. However,
these approaches do not guarantee that the desired path will be
invariant, i.e., if for some time the output is on the path and has a
velocity tangent to the path, then the output will remain on the path
for all future time.

Virtual holonomic constraints can be used to guarantee invariance of a
path via set stabilization. This methodology is used for orbital
stabilization for a Furuta pendulum~\cite{shiriaev2007virtual},
helicopters~\cite{westerberg2009motion}, and, in the sense of hybrid
systems, for the walking/running of bipedal
robots~\cite{chevallereau2004asymptotic}. Also,
\cite{banaszuk1995feedback} introduces conditions for feedback
linearization of the transverse dynamics associated with periodic
orbits. This inspired the work in~\cite{nielsen2006output} in which an
arbitrary non-periodic path in the output space can be made attractive
and invariant using transverse feedback linearization (TFL) for
single-input systems. This idea was further extended
in~\cite{TFL:hladio2011path} by providing conditions for which desired
motions can be achieved while on the path, as opposed to orbital stabilization where the dynamics along the closed curve in the state space are fixed. 
The approach
in~\cite{TFL:hladio2011path} simplifies controller design and
decomposes the path following design problem into two independent
sub-problems: staying on the path and moving along the path.

However, in \cite{TFL:hladio2011path}, perfect knowledge of the
dynamic parameters of the system was assumed, performance was
demonstrated only for a linear, decoupled system, and the path is
restricted to a special case of simple curves such as circles. One
contribution of this paper is that we enlarge the class of allowable
paths, allowing self-intersecting paths and curves that are generated
through spline interpolation of given waypoints. This is accomplished
by designing a numerical algorithm and by using a systematic approach
to generate zero level set representations of the curves. Another
contribution of this work is extending the class of systems
of~\cite{TFL:hladio2011path} to mechanical systems with kinematic
redundancies, by employing a novel redundancy resolution technique at
the dynamic level, yielding bounded internal dynamics. Khatib proposed a similar redundancy resolution mechanism for a specific class of mechanical systems
in~\cite{Khatib1987unified}, namely the torque-input model of robot
manipulators. However the approach in this paper is to resolve the
redundancies directly in the context of path following via feedback
linearization, unlike in~\cite{Khatib1987unified}, and is thus
applicable to general mechanical systems, which may include a dynamic map between the system input and the output forces. To the best of the
author's knowledge, redundancy resolution in the context of feedback
linearization is an unsolved problem. We show that, under the
assumption that the redundant system has viscous friction on each
degree of freedom, the control law obtained by solving a static,
constrained, quadratic optimization problem yields bounded closed-loop
internal dynamics. We further validate the control strategy on a
non-trivial redundant platform with significant modelling
uncertainties using a robust controller.

In this paper, the class of paths that can be followed are broadened, made possible by systematic approach to generating zero level set of a function, and using a numerical algorithm in the controller. A novel approach to resolving redundancies is proposed. Finally, non-trivial experimental verification including robustification is performed. 

The detailed problem statement and proposed approach are outlined in
Section~\ref{section:problem}. The control design is explained in
Section~\ref{section:path_following_design} and experimental results
are shown in Section~\ref{section:Experiment}.

\subsection{Notation} 
Let $\inner{x}{y}$ denote the standard inner product of vectors $x$
and $y$ in $\Real^n$. The Euclidean norm of a vector and induced
matrix norm are both denoted by $\norm{\cdot}$. The notation
$s\circ h : A \rightarrow C$ represents the composition of maps
$s: B \to C$ and $h : A \to B$. The $i$th element of a vector is
denoted $[x]_i$. The symbol $\coloneqq$ means equal by
definition. Given a $C^1$ mapping $\phi : \Real^n \to \Real^m$ let
$\D\phi_x$ be its Jacobian evaluated at $x\in\Real^n$. If $f$,
$g : \Real^n \to \Real^n$ are smooth vector fields we use the
following standard notation for iterated Lie derivatives
$L^0_f\phi \coloneqq \phi$,
$L^k_f\phi \coloneqq L_f(L^{k-1}_f\phi) = \langle \D
L_f^{k-1}\phi_x,f(x)\rangle$,
$L_gL_f\phi \coloneqq L_g(L_f\phi) = \langle \D
L_f\phi_x,g(x)\rangle$.


\section{Problem Formulation} \label{section:problem}

We consider path following problems in the workspace of kinematically
redundant, fully actuated, Euler-Lagrange systems. In this section we
define the class of systems considered, the class of allowable paths
and conclude by stating the problem studied.

\subsection{Class of systems} \label{section:class_of_systems}
We consider a fully actuated Euler-Lagrange system with an
$N$-dimensional configuration space contained in $\Real^N$ and $N$ control
inputs $u \in \Real^N$. The model is given by
\[
\frac{\D}{\D t} \DER{\mathcal{L}}{\dot{q}} - \DER{\mathcal{L}}{{q}} = A(q) u
\]
where $q,\dot{q} \in \Real^N$ are the configuration positions and velocities, respectively, of the system, and $\mathcal{L}(q, \dot{q})$ is the Lagrangian function. We assume
that $\mathcal{L}$ is smooth and has the form
$\mathcal{L}(q, \dot{q}) = K(q, \dot{q}) - P(q)$ where
$K(q, \dot q) = (1/2)\dot{q}^\top D(q)\dot{q}$ is the system's kinetic
energy and $P : \Real^N \rightarrow \mathbb{R}$ is the system's
potential energy. The inertia matrix $D(q)$ is positive definite for
all $q$. Furthermore, $A : \Real^N \to \Real^{N\times N}$, the input matrix (usually identity), is smooth and nonsingular for all $q$. 

The system can be rewritten in the standard vector form 
\begin{equation}
D(q)\ddot{q} + C(q,\dot{q})\dot{q} + G(q) = A(q) u
\label{eq:euler_lagrange_system}
\end{equation}
where $C(q,\dot{q}) \in \mathbb{R}^{N \times N}$ represents the
centripetal and Coriolis terms, and $G(q) = (\D P_q)^\top = \nabla
P(q)\in \mathbb{R}^N$ represents the gravitation effects
\cite{spong2006robot}. 
Defining $x_c \coloneqq q$, $x_v \coloneqq \dot{q}$, $n \coloneqq 2N$,
and $x \coloneqq (x_c, x_v) \in \Real^N \times \Real^N$ we can
express~\eqref{eq:euler_lagrange_system} in state model form
\begin{align} 
	\begin{split}
		\label{eq:system_StateSpace}
		\dot{x} &= f(x) + g(x)u \coloneqq \begin{bmatrix}
		x_v \\
		f_v(x)
		\end{bmatrix} + \begin{bmatrix}
		0 \\
		g_v(x_c)
		\end{bmatrix}u,
	\end{split}
\end{align}
where $f : \Real^n \rightarrow \Real^n$ and
$g : \Real^n \rightarrow \Real^{n \times n}$ are smooth,
$g_v(x_c) \coloneqq D^{-1}(x_c)A(x_c)$ and
$f_v(x) \coloneqq -D^{-1}(x_c) \left( C(x)x_v + G(x_c) \right)$.

Motivated by the task space of a robotic manipulator, we take the output of system~\eqref{eq:system_StateSpace} to satisfy
\begin{equation}
	\label{eqn:OutputOfSystem}
	y = h(x), \quad \frac{\partial h(x)}{\partial x_v} = 0, \quad
        y \in \mathbb{R}^p
\end{equation}
where $p$ is the dimension of the output space and
$h:\Real^n \rightarrow \Real^p$ is smooth. Let
$J(x_c) \coloneqq \frac{\partial h}{\partial x_c}$, the output
Jacobian, be non-singular. We allow the system to be redundant in the
following sense.
\begin{definition}
  System \eqref{eq:system_StateSpace},\eqref{eqn:OutputOfSystem} is
  \textbf{redundant} if $N > p$.
\end{definition}

\subsection{Admissible paths} \label{section:path_assumptions}

We assume that we are given a finite number, $\nsplines \in
\mathbb{N}$, of $(p+1)$-times continuously differentiable, parametrized
curves in the output space of~\eqref{eq:system_StateSpace}
\begin{align}
  \begin{split} \label{eq:path_definition} \sigma_k : \Real
    &\rightarrow \Real^p, \quad
    k \in \set{1,2,\ldots,\nsplines}. \\
\end{split}
\end{align}
The curves need not be polynomials, but we nevertheless use the term
spline in accordance with common practice. We are also given
$\nsplines$ non-empty, closed, intervals of the real line
$\mathbb{I}_k \coloneqq [\lambda_{(k, \mathrm{min})}, \lambda_{(k,
  \mathrm{max})}] \subset \Real$. The desired path is the set
\begin{equation}
  \label{eq:path_image}
\mathcal{P} \coloneqq \bigcup_{k=1}^{\nsplines} \sigma_k\left(\mathbb{I}_k\right).
\end{equation}
We assume that the overall path is $(p+1)$-times continuously
differentiable in the following sense:

\begin{assumption}[smoothness of path]
\label{assumption:path_smooth}
The curves~\eqref{eq:path_definition} satisfy
  \begin{align*} \begin{split}
    \label{eq:path_smooth}
    \left(\forall k \in \set{1,2,\ldots,\nsplines - 1} \right)
    \left(\forall i \in \set{0,1,\ldots,p+1} \right) \\ \left.\frac{\D^i
        \sigma_k}{\D\lambda^i}\right|_{\lambda = \lambda_{(k, \mathrm{max})}} = \left.\frac{\D^i
        \sigma_{k+1}}{\D\lambda^i}\right|_{\lambda = \lambda_{(k+1, \mathrm{min})}}.
	\end{split}
  \end{align*}
  In the case of closed paths, the same equality must hold between spline index $1$ and $\nsplines$.
\end{assumption}

This assumption is required to ensure continuous control effort over the entire path. In the case of spline interpolation (fitting polynomials between desired waypoints),
Assumption~\ref{assumption:path_smooth} can be enforced by setting it as
a constraint in the spline fitting formulation and using polynomials of sufficiently high degree. Typically the
parameter $\lambda$ of each curve $\sigma_k$ represents the chord
length between waypoints; namely, $\lambda_{(k,\mathrm{min})} = 0$ and
$\lambda_{(k,\mathrm{max})} = l_k$ where $l_k$ is the Euclidean
distance between the $k$'th and $(k+1)$
waypoints~\cite{Erkorkmaz2001trajandspline}. 

\begin{remark}
  Assumption~\ref{assumption:path_smooth} can be relaxed if continuity
  of the control signal over the entire path isn't
  required. Furthermore, if Assumption \ref{assumption:path_smooth} is
  only true for $i=0$, i.e., the path is not continuously
  differentiable, the control input can still be kept continuous by
  designing the desired tangential velocity profile in such a way that
  the output slows down and stops at
  $\sigma_k(\lambda_{(k,\mathrm{max})}) =
  \sigma_{k+1}(\lambda_{(k+1,\mathrm{min})})$,
  and then resumes motion along curve $k+1$ in finite time. See Remark
  \ref{remark:eta2ref_trajectory_discontinuous}
\hfill $\bullet$
\end{remark}

We further assume that the curves~\eqref{eq:path_definition} are
framed.
\begin{assumption}[framed curves]
\label{assumption:path_framed}
For each spline $k$, the first $p$ derivatives of $\sigma_k$ are
linearly independent, i.e.,
\begin{equation*}
	\left(\forall k \in \set{1,2,\ldots,\nsplines} \right) \left(\forall
	  \lambda \in \mathbb{I}_k\right) 
	\Sp_{\Real}\set{\sigma_k^\prime(\lambda),\sigma_k^{\prime\prime}(\lambda),\ldots,\sigma_k^{(p)}(\lambda)} = \Real^p.
\end{equation*}
\end{assumption}
Assumption \ref{assumption:path_framed} allows the use of Gramm-Schmidt orthogonalization to construct Frenet-Serret frames (FSFs) in the output space $\Real^p$. The use of
FSFs for control of mobile robots is a well-known
technique~\cite{WitSicBas97}. In this paper we use
Assumption~\ref{assumption:path_framed} to generate a zero-level set
representation of $\mathcal{P}$ in the state space
of~\eqref{eq:system_StateSpace}. Previous works \cite{TFL:hladio2011path} relied on the zero-level set representation to already be available by restricting the class of paths to one-dimensional embedded submanifolds, which precludes self-intersecting curves.


\begin{remark} \label{remark:assumption_framed_relax}
Assumption \ref{assumption:path_framed} can be relaxed to only requiring that the curve be regular by not using FSF to construct a basis that aligns with the curve at each point along the curve. Consider a straight line $\sigma(\lambda) = \begin{bmatrix} \lambda & 0 \end{bmatrix}^\top \in \Real^2$. This curve violates Assumption \ref{assumption:path_framed} at all points $\lambda$. The first basis vector is taken to be the unit tangent vector $\begin{bmatrix} 1 & 0 \end{bmatrix}^\top$(as is the case in the FSF). The normal vector cannot be constructed via FSF, since the curvature is zero. However, one can choose the normal vector to be $\begin{bmatrix}
0 & 1 \end{bmatrix}^\top$. 

Furthermore, consider a regular torsion free curve in the $xy$-plane
of a 3-dimensional workspace. This curve also violates Assumption
\ref{assumption:path_framed}, since $\sigma^{(3)}(\lambda)$ will be a
linear combination of $\sigma'$ and $\sigma''$. However, one can
choose the first basis vector to again be the unit tangent vector
$\frac{\sigma'(\lambda)}{\norm{\sigma'(\lambda)}}$. The unit normal
vector can be chosen to be the 90 degree rotation of the first basis
vector, and the final basis vector can be chosen to be
$\begin{bmatrix} 0 & 0 & 1 \end{bmatrix}^\top$.  Once the appropriate frame is chosen, the proposed approach for coordinate transformation and control design (Section \ref{section:path_following_design}) can be applied without any additional modifications.
\hfill $\bullet$
\end{remark}


Path following control design can naturally be cast as a set
stabilization problem. This point of view was successfully taken
in~\cite{gill2013robust, TFL:hladio2011path}. We also take this
approach but, unlike previous work, we do not require that the path
$\mathcal{P}$ be an embedded, nor immersed, submanifold of the output
space. In other words the curve may be self-intersecting. Removing
this restriction is particularly useful because it allows the curve to
go through an arbitrary set of waypoints in the output space of
system~\eqref{eq:system_StateSpace} and can be constructed using
classical spline fitting.



\subsection{Problem statement}
\label{section:problem_statement}
The problem studied in this paper is to find a continuous feedback
controller for system~\eqref{eq:system_StateSpace}, with $\nsplines$
curves that satisfy Assumptions~\ref{assumption:path_smooth}
and~\ref{assumption:path_framed}, that renders the path $\mathcal{P}$
in~\eqref{eq:path_image} invariant and attractive. Invariance is
equivalent to saying that if for some time $t=0$ the state $x(0)$ is
appropriately initialized with the output $y=h(x(0))$ on the path
$\mathcal{P}$, then $(\forall t \geq 0) \; h(x(t)) \in \mathcal{P}$.
Attractiveness is equivalent to ensuring that for initial
conditions $x(0)$ such that the output $h(x(0))$ is in a neighbourhood of the desired path $\mathcal{P}$, $h(x(t))$ under~\eqref{eq:system_StateSpace}
approaches $\mathcal{P}$.

The motion along the path should also be controllable, so that a
tangential position or velocity profile ($\eta^\mathrm{ref}(t)$) can be tracked. Furthermore,
any remaining redundant dynamics (will show up as internal dynamics
for the closed-loop system) must be bounded.


\section{Multiple Spline Path Following
  Design} \label{section:path_following_design} We begin by defining a
coordinate transformation in Section \ref{section:coordinate_transform} that depends on a given path
segment~\eqref{eq:path_definition} and the system
dynamics~\eqref{eq:system_StateSpace}. These coordinates partition the
system dynamics into a sub-system that governs the motion transversal
to the desired path and a tangential sub-system that governs the
motion of the system along the path (Section \ref{sec:dynamics_and_control}). The tangential dynamics are
further decomposed to reveal the redundant dynamics. It will be shown
that the system in transformed coordinates has a structure that
facilities control design, namely the dynamics will appear linear and
decoupled. One of the keys to this construction is the ability to
unambiguously determine the spline that is currently closest to the
system output~\eqref{eqn:OutputOfSystem}. By doing so, we are able to
deal with self-intersecting curves and curves~\eqref{eq:path_image}
that aren't embedded submanifolds. A novel redundancy resolution approach is then proposed to ensure the redundant dynamics remain bounded (Section \ref{section:redundancy_resln}). The overall block diagram can be found below in Figure \ref{fig:block_diagram}.

\begin{figure}[h!]
	\centering
	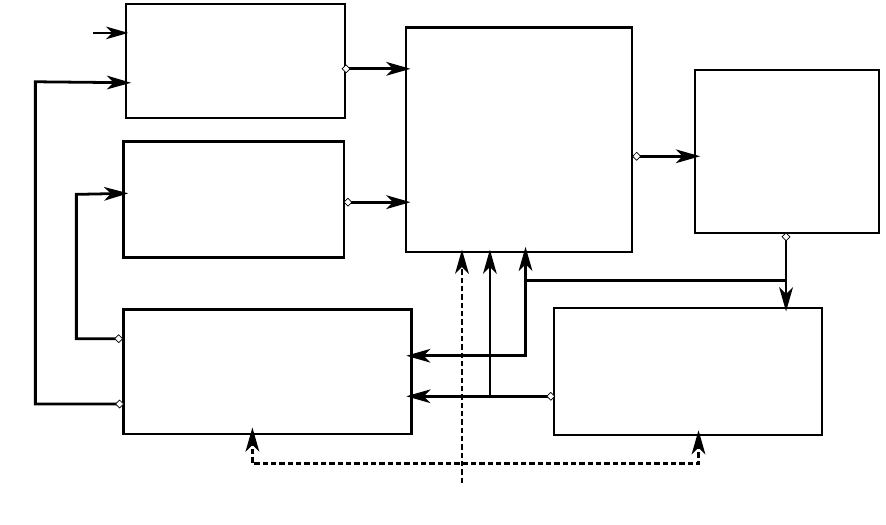
	\caption{Entire control system block diagram.}
	\label{fig:block_diagram}
\end{figure}

%

\subsection{Coordinate Transformation}
\label{section:coordinate_transform} 
We construct a coordinate transformation $T$ that maps the state $x$
of system \eqref{eq:system_StateSpace} to a new set of coordinates
that correspond to the tangential position/velocity along the path,
and the distance and velocity towards/transversal to the path.

\subsubsection{Tangential States} \label{section:tangential}
For the moment assume that the closest spline to the output $y$ is known; denote it as $k^*$. In the case that $\nsplines=1$, the $k$ notation can be dropped to ease readability. Furthermore, denote the parameter of spline $k^*$ that corresponds to the closest point to the output $y$ as $\lambda^* \in \mathbb{I}_{k^*}$ \cite{TFL:hladio2011path}
\begin{equation}
	\label{eqn:lambda_star}
	\lambda^* \coloneqq \varpi_{k^*}(y) \coloneqq \arginf_{\lambda \in \mathbb{I}_{k^*}}\norm{y-\sigma_{k^*}(\lambda)}.
\end{equation}

The first tangential state is the projected, traversed arclength along the path:
\begin{align}
	\label{eq:eta1}
	\eta_1 = \eta_{1_{k^*}}(x) \coloneqq s_{k^*} \circ \varpi_{k^*} \circ h(x) + \sum_{k=1}^{k^*-1}s_k(\lambda_{(k,max)})
\end{align}
where
\begin{align*}
  s_{k}(\lambda) \coloneqq \int\limits_{\lambda_{(k,min)}}^{\lambda}\norm{\frac{\mathrm{d} \sigma_{k^*} (\lambda)}{\mathrm{d} \lambda}} \mathrm{d} \lambda.
\end{align*}
This integral does not have to be computed for real time implementation if only tangential velocity control is required, which will be the case in the experiment (Section \ref{section:Experiment}). Note the slight abuse of notation for using $\eta_1$ to represent both the state and the map. The summation term on the right in \eqref{eq:eta1} is used to get the arclength over the entire path $\mathcal{P}$, so that $\eta_1$ remains continuous even as the current spline number $k^*$ increments/decrements. Note that this summation can be computed offline and accessed as a look-up-table during real-time control.

With $\eta_1$ representing the projected tangential position of the
output $y$ along the path, tangential velocity is computed by taking
the time derivative. By the chain rule,
\begin{align} \label{eq:eta2} \begin{split}
	\eta_2 = \eta_{2_{k^*}}(x) \coloneqq \dot{\eta_1} &=  \frac{\mathrm{d} s_{k^*}}{\mathrm{d} \lambda^*} \frac{\mathrm{d} \varpi_{k^*}}{\mathrm{d} y}  \frac{\partial h}{\partial x_c} \\
			&=  \left.\left\langle \vect{e}_{1_{k^*}}(\lambda^*) , J(x_c)x_v \right\rangle \right|_{\lambda^*=\varpi \circ h(x)}
\end{split}
\end{align}
(see Appendix \ref{appendix:derivation_of_eta2} for complete derivation) where $\vect{e}_1$ is the unit-tangent Frenet-Serret vector:
\begin{equation} \label{eq:e_1} \vect{e}_{1_{k^*}}(\lambda^*)
  \coloneqq \frac{\mathrm{d} \sigma_{k^*}}{\mathrm{d} s_{k^*}} =
  \left. \frac{\mathrm{d} \sigma_{k^*}}{\mathrm{d} \lambda}
  \right|_{\lambda=\lambda^*} \frac{\mathrm{d} \lambda^*}{\mathrm{d}
    s_{k^*}} =
  \frac{\sigma_{k^*}'(\lambda^*)}{\norm{\sigma_{k^*}'(\lambda^*)}}.
\end{equation}. 

\subsubsection{Numerical Optimization for $\lambda^*$ and $k^*$}
\label{section:numerical_opt} 
The tangential coordinates defined
by~\eqref{eq:eta1},~\eqref{eq:eta2} rely on the
computation of the parameters $k^\ast$ -- which characterizes which
of the $\nsplines$ curves $\sigma_k(\mathbb{I}_k)$ is closest to the
system's output -- and $\lambda^\ast$ -- which characterizes the point $\sigma_{k^\ast}(\lambda^*)$ closest to the system's output.

\begin{assumption}[Initial $\lambda^*$ and $k^*$ are known]
\label{assumption:numerical_algorithm_initialization}
Based on the initial conditions of the system at time $t=0$, the corresponding $k^* \in \{1,...,\nsplines\}$ and $\lambda^* \in \mathbb{I}_{k^*}$ are known. 
\end{assumption}

Assumption \ref{assumption:numerical_algorithm_initialization} can be satisfied by \textit{apriori} global minimum brute-force optimization solvers to find the closest point of $h(x(0))$ to the path $\mathcal{P}$. One can do this by quantizing the path $\mathcal{P}$ and searching for the point with minimum distance to the output, whilst keeping track of the curve number $k$ and the path parameter $\lambda$. The output to this approach will not yield the exact solution (due to the quantization of $\mathcal{P}$), but will be adequate for sufficiently small quantization. In the case that multiple global minima exist, one can be chosen arbitrarily. Then for $t>0$, a numerical local optimizer like gradient descent can be used online, initialized at the previous time step's $k^*, \lambda^*$. The algorithm we developed (Algorithm \ref{Algorithm:grad_descent}) can be found in Appendix \ref{appendix:Numeric_opt}.

\begin{remark}	
  In~\cite{TFL:hladio2011path}, the path to be followed was assumed to
  be a one-dimensional embedded submanifold, which precludes
  self-intersecting curves. By employing Algorithm
  \ref{Algorithm:grad_descent} under Assumption
  \ref{assumption:numerical_algorithm_initialization}, this work
  removes that restriction and enlarges the class of admissible paths.

  Consider first the case in which two curves $\sigma_i$, $\sigma_j$,
  $i, j \in \set{1, \ldots, \nsplines}$ overlap. That is, there exist
  two closed, possibly disconnected, sets
  $\Lambda_i \subset \mathbb{I}_i$, $\Lambda_j \subset \mathbb{I}_j$
  with
  $(\forall \lambda_i \in \Lambda_i)(\exists \lambda_j \in \Lambda_j)
  \: \sigma_i(\lambda_i) = \sigma_j(\lambda_j)$.
  Suppose that at the current time-step, we have $k^\ast = i$ and
  $\lambda^\ast = \lambda_i \in \Lambda_i$. Then the coordinate
  transformation employed is completely determined by the curve
  $\sigma_i$. At the next time-step, the algorithm will search
  $\mathbb{I}_k$ for its next update. This local search implies that
  the algorithm will not ``jump'' to $\mathbb{I}_j$ despite the
  existence of intersection points.

    The second case, the one in which there is single curve that is
    not injective, is handled similarly. Suppose there exist two
    values $\lambda_1, \lambda_2 \in \mathbb{I}_{k^\ast}$, $\lambda_1
    < \lambda_2$, with $\sigma_{k^\ast}(\lambda_1) =
    \sigma_{k^\ast}(\lambda_2)$. By continuity, it is possible to find
    two open intervals $\mathbb{I}_1$, $\mathbb{I}_2 \subset
    \mathbb{I}_{k^\ast}$ with $\lambda_1 \in \mathbb{I}_1$ and
    $\lambda_2 \in \mathbb{I}_2$ and $\mathbb{I}_1 \cap \mathbb{I}_2 =
    \varnothing$. Now suppose that at the current time-step, we have
    $\lambda^\ast = \lambda_1$. At the next time-step, the algorithm will
    search $\mathbb{I}_1$ for its next update. This local search
    implies that the algorithm will not ``jump'' to $\mathbb{I}_2$
    despite the existence of an intersection point.

    In addition, when the output $y$ is equidistant to multiple points
    on the path, the algorithm will again just choose the $\lambda^*$
    that is closest to the previous time-step's $\lambda^*$, due to
    the local nature of the monotonic gradient descent
    algorithm. \hfill $\bullet$
\end{remark}

\subsubsection{Transversal States} \label{section:transversal_states}
The transversal states represent the cross track error to the path
using the remaining FSF (normal) vectors as follows. The $k^*$
notation will be dropped for brevity. The generalized Frenet-Serret
(FS) vectors are constructed applying the Gram-Schmidt
Orthonormalization process to the vectors $\sigma^\prime(\lambda)$,
$\sigma^{\prime\prime}(\lambda)$, \ldots, $\sigma^{(p)}(\lambda)$:
\begin{equation} \label{eq:FS_vector}
		\vect{e}_j(\lambda) \coloneqq \frac{ \bar{\vect{e}}_j(\lambda)  }{ \norm{\bar{\vect{e}}_j(\lambda)} }
\end{equation} 
where
\begin{equation*} 
  \quad \bar{\vect{e}}_j(\lambda) \coloneqq \sigma^{(j)}(\lambda) - \sum_{i=1}^{j-1} \left\langle \sigma^{(j)}(\lambda),\vect{e}_i(\lambda) \right\rangle \vect{e}_i(\lambda), 
\end{equation*}
for $j \in \set{1,\ldots,p}$. This formulation is well-defined by
Assumption \ref{assumption:path_framed}. For curves that violate
Assumption \ref{assumption:path_framed}, other ad-hoc
orthonormalization processes can be used, see Remark
\ref{remark:assumption_framed_relax}.
The transversal positions can be computed by projecting the error to the path, $y-\sigma(\lambda^*)$, onto each of the FS normal vectors: 
\begin{equation} \label{eq:transversal_positions} \xi_1^{j-1} =
  \xi_1^{j-1}(x) \coloneqq \left. \left\langle \vect{e}_j(\lambda^*) ,
      h(x)-\sigma(\lambda^*) \right\rangle \right|_{\lambda^*=\varpi
    \circ h(x)}
\end{equation}
for $j \in \set{2,\ldots,p}$. Taking time derivatives yields (see
Appendix \ref{appendix:derivation_of_xi_vel} for complete derivation)
\begin{multline}  \label{eq:transversal_vel}
	\xi_2^{j-1} \coloneqq \xi_2^{j-1}(x) = \dot{\xi}_1^{j-1} =  \frac{\eta_2(x)}{\norm{\sigma'(\lambda^*)}} \left\langle  \vect{e}_j'(\lambda^*), h(x) - \sigma(\lambda^*) \right\rangle 
	\\+ \left\langle  \vect{e}_j(\lambda^*) , J(x_c)x_v \right\rangle |_{\lambda^*=\varpi \circ h(x)},
\end{multline}
$j\in \set{2,\ldots,p}$, and $\vect{e}_j'(\lambda^*)$ are given in \eqref{eq:generalized_FS_equations}. If $p=2$, then $\vect{e}_2'(\lambda^*)$ and
$ h(x) - \sigma(\lambda^*)$ are orthogonal and $\xi_2^1$ simplifies to
$\left\langle \vect{e}_2(s(\lambda^*)) , J(x_c)x_v \right\rangle$.
Figure~\ref{fig:3d_frenet} illustrates the $p=3$ case.
\begin{figure}[h!]
	\centering
	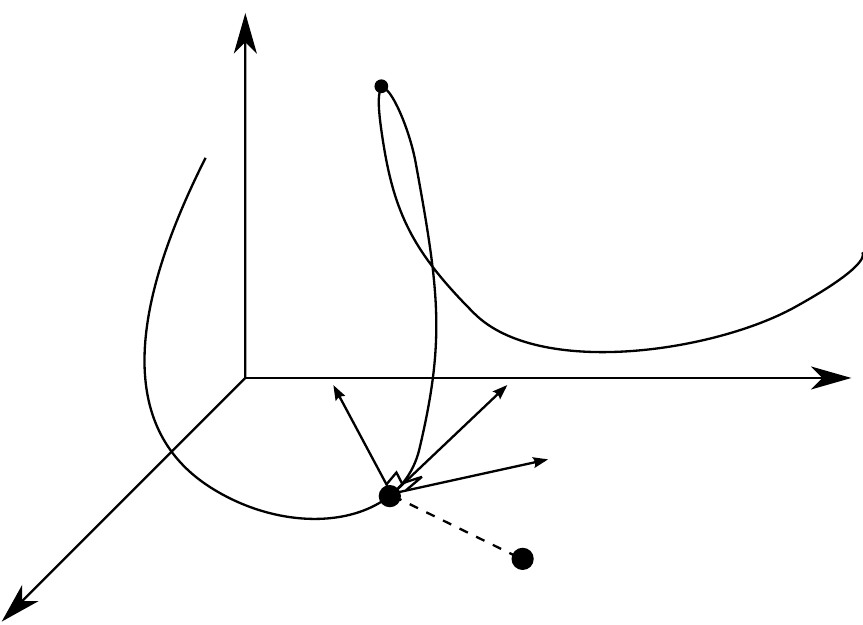
	\caption{Example of the FSF when $p=3$.}
	\label{fig:3d_frenet}
\end{figure}

\subsubsection{Diffeomorphism} \label{section:Diffeomorphism} In this
section, we show that the tangential and transversal maps defined
above can be used to construct a diffeomorphism. For brevity we again
drop the spline $k$ notation. First, define the lift of curve $\sigma$
to $\Real^n$ as
\begin{equation*}
	\Gamma \coloneqq \{ x \in \Real^n : \xi_1^1(x) = \xi_1^2(x) = ... = \xi_1^{p-1}(x) = 0   \}
\end{equation*}
which represents the states $x$ that correspond to the output $y$
being on the curve. The above set is not necessarily a manifold since the curves considered can be self-intersecting or even self-overlapping, thus is more general to the restricted manifold case analysed in \cite{TFL:hladio2011path}.
%

\begin{lemma} \label{lemma:linear_independent_diff} There exists a
  domain $U \subseteq \Real^n$ with $\Gamma \subset U$ such that for
  all $x \in U$, and \eqref{eqn:lambda_star} is solved by Algorithm \ref{Algorithm:grad_descent}, the $2p$
  functions~\eqref{eq:eta1},~\eqref{eq:eta2},~\eqref{eq:transversal_positions},~\eqref{eq:transversal_vel}
  have linearly independent differentials. 
\end{lemma}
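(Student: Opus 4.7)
The plan is to form the $2p \times 2N$ Jacobian matrix $M(x)$ of the stacked map $(\eta_1, \xi_1^1, \ldots, \xi_1^{p-1}, \eta_2, \xi_2^1, \ldots, \xi_2^{p-1})$, show it has full row rank on the lifted curve $\Gamma$ by exhibiting a block‐triangular structure, and then invoke continuity of the differentials to extend the rank condition to an open neighborhood $U$ of $\Gamma$. The four families of functions split naturally into a ``position layer'' (depends only on $x_c$) and a ``velocity layer'' (linear in $x_v$ through $J(x_c)x_v$), which makes the block pattern emerge cleanly.

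First I would verify that on a neighborhood of $\Gamma$ the map $\lambda^\ast = \varpi \circ h(x)$ is a smooth function of $x$. The first-order optimality condition associated with \eqref{eqn:lambda_star} is $\langle h(x) - \sigma_{k^\ast}(\lambda), \sigma_{k^\ast}'(\lambda)\rangle = 0$, and on $\Gamma$ its derivative in $\lambda$ reduces to $-\|\sigma_{k^\ast}'(\lambda^\ast)\|^2$, which is strictly negative since $\sigma_{k^\ast}$ is regular by Assumption~\ref{assumption:path_framed}. The implicit function theorem then delivers local smoothness of $\lambda^\ast(x)$, while Assumption~\ref{assumption:numerical_algorithm_initialization} together with the monotone local character of Algorithm~\ref{Algorithm:grad_descent} guarantees that the correct branch is tracked in the presence of self‐intersections or equidistant points.

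Next I would compute the relevant partials. Because $\eta_1$ and the $\xi_1^{j-1}$ depend on $x$ only through $x_c$, their $x_v$-partials vanish identically. Using the identities $\dot\eta_1 = \eta_2$ and $\dot\xi_1^{j-1} = \xi_2^{j-1}$, and observing that the explicit first summand of $\xi_2^{j-1}$ in \eqref{eq:transversal_vel} carries the factor $h(x) - \sigma(\lambda^\ast)$ which vanishes on $\Gamma$, one obtains
\[
\left.\frac{\partial \eta_1}{\partial x_c}\right|_{\Gamma} = \vect{e}_1(\lambda^\ast)^\top J(x_c), \qquad \left.\frac{\partial \xi_1^{j-1}}{\partial x_c}\right|_{\Gamma} = \vect{e}_j(\lambda^\ast)^\top J(x_c).
\]
A direct differentiation in $x_v$ yields $\partial \eta_2/\partial x_v = \vect{e}_1^\top J$, and by the same vanishing argument $\partial \xi_2^{j-1}/\partial x_v|_{\Gamma} = \vect{e}_j^\top J$. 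Stacking, the Jacobian restricted to $\Gamma$ has the block lower‐triangular form
\[
\left.M\right|_{\Gamma} = \begin{bmatrix} E(\lambda^\ast)^\top J(x_c) & 0 \\ \star & E(\lambda^\ast)^\top J(x_c) \end{bmatrix},
\]
where $E(\lambda^\ast) \coloneqq \bigl[\vect{e}_1\ \vect{e}_2\ \cdots\ \vect{e}_p\bigr]$ is a $p \times p$ orthogonal matrix by the Gram–Schmidt construction~\eqref{eq:FS_vector} under Assumption~\ref{assumption:path_framed}.

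Finally, since $J(x_c)$ has full row rank $p$ and $E^\top$ is invertible, each diagonal block is a $p \times N$ matrix of rank $p$. A row‐dependence argument—examining the $x_v$ block of any null linear combination first forces the velocity‐layer coefficients to vanish, after which the $x_c$ block forces the position‐layer coefficients to vanish—shows $\operatorname{rank} \left.M\right|_{\Gamma} = 2p$. By continuity of the differentials (consequent on the smoothness of $\varpi \circ h$ established in the first step), the set $U \coloneqq \{x : \operatorname{rank} M(x) = 2p\}$ is open, contains $\Gamma$, and is the desired domain. The main obstacle I anticipate is the opening step: rigorously justifying that the branch of $\lambda^\ast(x)$ selected by Algorithm~\ref{Algorithm:grad_descent} coincides with the one furnished by the implicit function theorem on a neighborhood of $\Gamma$; all subsequent steps are essentially a bookkeeping exercise in the FSF.
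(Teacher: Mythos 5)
Your proposal is correct and follows essentially the same route as the paper: compute the differentials of the stacked maps, observe that the position-level functions $\eta_1,\xi_1^{j-1}$ have vanishing $x_v$-partials while the velocity-level functions $\eta_2,\xi_2^{j-1}$ reproduce the same row vectors in their $x_v$-blocks, and conclude full rank from orthonormality of the Frenet--Serret frame together with nonsingularity of $J(x_c)$. The only differences are minor: the paper retains the term $\tfrac{1}{\|\sigma'(\lambda^*)\|}\langle h(x)-\sigma(\lambda^*),\vect{e}_j'(\lambda^*)\rangle\,\vect{e}_1(\lambda^*)^\top$ and notes that rows of the form $\alpha\vect{e}_1^\top+\vect{e}_j^\top$ stay independent for any scalar $\alpha$, so full rank is obtained on all of $U$ directly rather than on $\Gamma$ followed by your openness-of-the-rank-condition extension, and your implicit-function-theorem justification of the smoothness of $\lambda^*(x)$ is a welcome detail that the paper leaves implicit.
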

\begin{proof}
See Appendix \ref{appendix:proof_lemma_LIDIFF}.
\end{proof}

By \cite[Proposition 5.1.2]{isidori1999nonlinear}, for each $x^* \in U$, it is possible to find a function $\varphi:\Real^n \rightarrow \Real^{n-2p}$ so that by the Inverse Function Theorem \cite{isidori1999nonlinear} there exists a neighbourhood $U_{x^*}$ such that the mapping 
\begin{align} \begin{split} \label{eq:diffeomorphism}
    T :  U_{x^*} \subseteq \Real^n  &\rightarrow T(U_{x^*}) \subseteq \Real^n \\
    x &\mapsto ( \eta_1 , \eta_2 , \xi_1^1 , \xi_2^1, \hdots,
    \xi_1^{p-1} , \xi_2^{p-1} , \zeta) \\ &= (\eta_1(x), \eta_2(x),
    \xi_1^1(x), \xi_2^1(x) , \hdots , \\ &\qquad \qquad \qquad
    \xi_1^{p-1}(x) , \xi_2^{p-1}(x) , \varphi(x))
\end{split}
\end{align}
is a diffeomorphism onto its image. In practice the domain of $T$ can
often be extended to all of $U$ (see Section
\ref{section:Experiment}).

Let us bring back the spline notation $k$ to explicitly indicate that
the coordinate transformation is dependent on the curve $k^*$, which
is determined using Algorithm \ref{Algorithm:grad_descent}. Namely,
the coordinate transformation can be viewed as a collection of $\nsplines$ maps
\begin{align*} 
	T_k :  U_k \subseteq \Real^n & \rightarrow T_k(U_k) \subseteq \Real^n \\
		 x &\mapsto
		 	(\eta_{1_k}(x),
		 	\eta_{2_k}(x),
		 	\xi_{1_k}^1(x),
		 	\xi_{2_k}^1(x),
		 	\hdots, \\ & \qquad \qquad \qquad
		 	\xi_{1_k}^{p-1}(x),
		 	\xi_{2_k}^{p-1}(x),
		 	\varphi_k(x)) = 
		 \end{align*} \begin{equation*} \begin{bmatrix}
		 	s_{k}(\lambda^*) + \sum_{k=1}^{k-1}s_k(\lambda_{(k,max)}) \\
		 	\left\langle \vect{e}_{1_k}(\lambda^*) , J(x_c)x_v \right\rangle \\
		 	\left\langle \vect{e}_{2_k}(\lambda^*) , h(x)-\sigma_k(\lambda^*) \right\rangle \\
		 	\frac{\eta_{2_k}(x)}{\norm{\sigma_k'(\lambda^*)}}\left\langle  \vect{e}_{2_k}'(\lambda^*) , h(x) - \sigma_k(\lambda^*) \right\rangle + \left\langle  \vect{e}_{2_k}(\lambda^*) , J(x_c)x_v \right\rangle \\
		 	\vdots \\
		 	\left\langle \vect{e}_{p_k}(\lambda^*) , h(x)-\sigma_k(\lambda^*) \right\rangle \\
		 	\frac{\eta_{2_k}(x)}{\norm{\sigma_k'(\lambda^*)}}\left\langle  \vect{e}_{p_k}'(\lambda^*) , h(x) - \sigma_k(\lambda^*) \right\rangle + \left\langle  \vect{e}_{p_k}(\lambda^*) , J(x_c)x_v \right\rangle \\
		 	\varphi_k(x)
		 \end{bmatrix}
\end{equation*}
where $\lambda^*=\varpi_k \circ h(x)$, $k \in \{1,\dots,\nsplines\}$.
As we show later, satisfying a path following task depends on a state
feedback involving the $\eta$ and $\xi$ states. Thus, it is desirable
for $\eta$ and $\xi$ to remain continuous from one curve on the path
to the next, i.e., as $k$ changes via Algorithm
\ref{Algorithm:grad_descent}, to avoid wear on equipment due to jerky
motions. The following proposition will show this.
\begin{proposition} \label{proposition:continuous_states}
	The coordinate transformation 
	\begin{equation}
		\bar{T}_k(x) := [\eta_{1_k}(x), \eta_{2_k}(x), \xi_{1_k}^1(x) , \xi_{2_k}^1(x), ..., \xi_{1_k}^{p-1}(x), \xi_{2_k}^{p-1}(x)]^\top
	\end{equation}
	is continuous as $k$ changes by Algorithm \ref{Algorithm:grad_descent}.
\end{proposition}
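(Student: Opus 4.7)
The plan is to argue componentwise continuity of $\bar T_k(x)$ at any instant where Algorithm~\ref{Algorithm:grad_descent} increments the index from $k$ to $k+1$. Such a transition occurs exactly when $\lambda^*$ exits $\mathbb{I}_k$ through $\lambda_{(k,\mathrm{max})}$ and is reinitialized at $\lambda_{(k+1,\mathrm{min})}$ under the new index. Because $x(t)$ is a continuous function of time (solutions of the ODE~\eqref{eq:system_StateSpace} are continuous), it is enough to show the pointwise matching $\bar T_k(x)=\bar T_{k+1}(x)$ at the state $x$ attained at the transition. This in turn reduces to showing that each curve-dependent ingredient of the transformation built from $\sigma_k$ evaluated at $\lambda_{(k,\mathrm{max})}$ coincides with the corresponding ingredient built from $\sigma_{k+1}$ at $\lambda_{(k+1,\mathrm{min})}$; the key tool is Assumption~\ref{assumption:path_smooth}, which provides equality of $\sigma_k^{(i)}(\lambda_{(k,\mathrm{max})})$ and $\sigma_{k+1}^{(i)}(\lambda_{(k+1,\mathrm{min})})$ for $i=0,1,\ldots,p+1$.

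First I would dispatch the tangential coordinates. For $\eta_{1_k}$, direct substitution in~\eqref{eq:eta1} gives at $\lambda^*=\lambda_{(k,\mathrm{max})}$ the value $\sum_{j=1}^{k}s_j(\lambda_{(j,\mathrm{max})})$, which equals $0+\sum_{j=1}^{k}s_j(\lambda_{(j,\mathrm{max})})$ obtained on the $(k+1)$-side at $\lambda^*=\lambda_{(k+1,\mathrm{min})}$. For $\eta_{2_k}$, the only curve-dependent ingredient is the unit tangent $\vect{e}_{1_k}(\lambda^*)=\sigma'_k(\lambda^*)/\norm{\sigma'_k(\lambda^*)}$ in~\eqref{eq:eta2}; Assumption~\ref{assumption:path_smooth} at $i=1$ gives $\sigma'_k(\lambda_{(k,\mathrm{max})})=\sigma'_{k+1}(\lambda_{(k+1,\mathrm{min})})$, so $\vect{e}_{1_k}(\lambda_{(k,\mathrm{max})})=\vect{e}_{1_{k+1}}(\lambda_{(k+1,\mathrm{min})})$ and thus $\eta_{2_k}$ matches across the transition.

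For the transversal coordinates I would argue as follows. The Gram–Schmidt construction~\eqref{eq:FS_vector} expresses $\vect{e}_{j_k}(\lambda^*)$ as a smooth rational function of $\sigma_k'(\lambda^*),\ldots,\sigma_k^{(j)}(\lambda^*)$, whose denominators are nonzero by Assumption~\ref{assumption:path_framed}, and differentiating the same rational expression in $\lambda$ shows that $\vect{e}_{j_k}'(\lambda^*)$ depends smoothly on $\sigma_k'(\lambda^*),\ldots,\sigma_k^{(j+1)}(\lambda^*)$. Assumption~\ref{assumption:path_smooth} equates all these derivatives up to order $p+1$ at the transition parameters (which is precisely why it is stated up to order $p+1$), hence $\vect{e}_{j_k}(\lambda_{(k,\mathrm{max})})=\vect{e}_{j_{k+1}}(\lambda_{(k+1,\mathrm{min})})$ and $\vect{e}_{j_k}'(\lambda_{(k,\mathrm{max})})=\vect{e}_{j_{k+1}}'(\lambda_{(k+1,\mathrm{min})})$ for $j=1,\ldots,p$. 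Combined with $\sigma_k(\lambda_{(k,\mathrm{max})})=\sigma_{k+1}(\lambda_{(k+1,\mathrm{min})})$ and with the already established continuity of $\eta_{2_k}$, substitution into~\eqref{eq:transversal_positions} and~\eqref{eq:transversal_vel} yields $\xi_{1_k}^{j-1}(x)=\xi_{1_{k+1}}^{j-1}(x)$ and $\xi_{2_k}^{j-1}(x)=\xi_{2_{k+1}}^{j-1}(x)$ at the transition state.

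The main obstacle, as I see it, is the careful accounting that Gram–Schmidt, together with its $\lambda$-derivative, depends continuously on the tuple $(\sigma_k',\ldots,\sigma_k^{(p+1)})$ of input vectors and their derivatives, so that the matching of inputs guaranteed by Assumption~\ref{assumption:path_smooth} transfers to the outputs; Assumption~\ref{assumption:path_framed} is needed to keep all normalizing denominators bounded away from zero throughout. Once this is established, the componentwise pointwise matches above, combined with continuity of $t\mapsto x(t)$, show that the map $t\mapsto \bar T_{k^*(t)}(x(t))$ has no jump at any index change produced by Algorithm~\ref{Algorithm:grad_descent}, which is the content of the proposition.
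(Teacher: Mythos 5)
Your proposal is correct and follows essentially the same route as the paper: reduce continuity across an index change to the pointwise identity $\bar{T}_k(x)=\bar{T}_{k+1}(x)$ at the stitch point, and trace every curve-dependent ingredient ($s_k$, $\sigma$, $\vect{e}_j$, $\vect{e}_j'$) back to the matching of $\sigma^{(i)}$ up to order $p+1$ guaranteed by Assumption~\ref{assumption:path_smooth}, with Assumption~\ref{assumption:path_framed} keeping the Gram--Schmidt denominators nonzero. The only cosmetic difference is that you get continuity of the frame derivatives $\vect{e}_j'$ by differentiating the Gram--Schmidt formula directly (using derivatives up to order $p+1$), whereas the paper first detours through the generalized Frenet--Serret equations and curvatures before falling back on that same differentiation; the two arguments coincide in substance.
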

\begin{proof}
	Based on Algorithm \ref{Algorithm:grad_descent}, $k$ changes when the output of the system crosses the hyperplane shown in Figure \ref{fig:proposition}.
	\begin{figure}[h!]
		\centering
		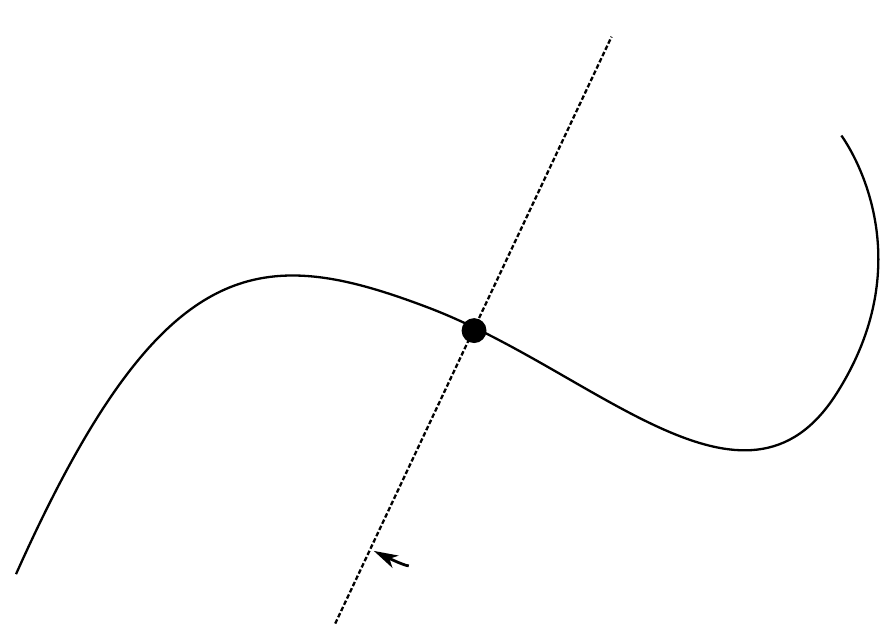
			\caption{Spline $k$ and spline $k+1$, their
                          stitch point $\sigma_{k}(\lambda_{(k,\mathrm{max})}) =\sigma_{k+1}(\lambda_{(k+1,\mathrm{min})})$, and the hyperplane.}
		\label{fig:proposition}
	\end{figure}
	Thus we must show that $\bar{T}_k(x) = \bar{T}_{k+1}(x)$ for $x$ that correspond to the output $y=h(x)$ on this hyperplane. This is equivalent to showing
	\begin{equation*}
		( \forall x \in \{x \in \Real^n | x  = (\varpi_k \circ h(x))^{-1}(\lambda_{(k,\mathrm{max})})\} ) \quad \bar{T}_k(x) = \bar{T}_{k+1}(x).
	\end{equation*}
	First, the coordinate transformation relies on the computation
        of $\lambda^*=\varpi_k \circ h(x)$, which can be easily
        verified to be continuous in $x$ for a fixed $k$. The tangential position for a fixed $k$ is continuous since $s_k(\lambda^*)$ is a
        function of $\sigma'(\lambda^*)$, which is continuous by
        Assumption \ref{assumption:path_smooth}. On the hyperplane,
        $\eta_{1_k}(x) = s_k(\lambda_{(k,max)}) +
        \sum_{i=1}^{k-1}s_i(\lambda_{(k,max)}) =
        \sum_{i=1}^{k}s_i(\lambda_{(k,max)}) = \eta_{1_{k+1}}(x)$
        and is still continuous at the transition. 
	
	For the remaining states, $h(x), J(x)$ are continuous by the
        class of systems \eqref{eq:system_StateSpace}. The remaining
        terms are the FS vectors and its derivatives. For a fixed $k$,
        the FS vectors $\vect{e}_{j_k}(\lambda^*)$ depends on
        $\sigma^{(j)}(\lambda^*)$ for $j=1,...,p$ by
        \eqref{eq:FS_vector}, and thus will be continuous by
        Assumption \ref{assumption:path_smooth}. Furthermore, the
        derivative of the FS vectors are governed by the FS equations
        for general spaces \eqref{eq:generalized_FS_equations}
              , and depend only on the generalized curvatures 
              $\mathcal{X}_i(\lambda) = \frac{\left\langle
                  \vect{e}_{i}'(\lambda), \vect{e}_{i+1}(\lambda)
                \right\rangle} {\norm{\sigma'(\lambda)}}$, $i \in \{1,...,p-1\}$. Thus, the highest
              derivative that appears again is
              $\sigma^{(p)}(\lambda^*)$ and so the FS vector
              derivatives will remain continuous for a fixed spline
              $k$. 
              At the transition point,
              $\vect{e}_{j_k}(\lambda_{(k,\mathrm{max})}) =
              \vect{e}_{j_{k+1}}(\lambda_{(k+1,\mathrm{min})})$
              holds if and only if
              $\sigma^{(j)}_k(\lambda_{(k,\textrm{max})}) =
              \sigma^{(j)}_{k+1}(\lambda_{(k+1,\textrm{min})})$
              for $j \in \set{1,\ldots,p}$. This is satisfied by
              Assumption \ref{assumption:path_smooth}.  To show that
              $\vect{e}_{j_k}'(\lambda_{(k,\mathrm{max})}) =
              \vect{e}_{j_{k+1}}'(\lambda_{(k+1,\mathrm{min})})$
              for $j \in \set{1,\ldots,p}$, 
              by \eqref{eq:generalized_FS_equations} it suffices
              to show that the generalized curvatures are continuous
              at the transition, i.e.,
              $\mathcal{X}_{i_{k}}(\lambda_{(k,\mathrm{max})}) =
              \mathcal{X}_{i_{k+1}}(\lambda_{(k+1,\mathrm{min})})$
              for $i \in \set{1,\ldots,p-1}$.  By definition of the curvatures, this then simplifies to
              showing
              $\vect{e}_{j_k}'(\lambda_{(k,max)}) =
              \vect{e}_{j_{k+1}}'(\lambda_{(k+1,min)})$
              for $j \in \{1,...,p-1\}$. These derivatives can be
              expressed by evaluating the derivative of
              \eqref{eq:FS_vector}, and so the equation will hold if
              $\sigma^{(j)}_k(\lambda_{(k,\textrm{max})}) =
              \sigma^{(j)}_{k+1}(\lambda_{(k+1,\textrm{min})})$
              for $j\in\set{1,\ldots,p}$. Again by Assumption
              \ref{assumption:path_smooth} this holds. 
\end{proof} 

\subsection{Dynamics and Control} \label{sec:dynamics_and_control} The
dynamics in transformed coordinates are
\begin{align} 
	\begin{split} \label{eq:dynamics_unsimplified}
		\dot{\eta}_1 &= \eta_2 \\
		\dot{\eta}_2 &= L^2_f \eta_1(x)  + L_gL_f \eta_1(x) u \\
		\dot{\xi}_1^1 &= \xi_2^1 \\
		\dot{\xi}_2^1 &= L^2_f \xi_1^1(x) + L_gL_f \xi_1^1(x) u \\
		& \vdots \\		
		\dot{\xi}_1^{p-1} &= \xi_2^{p-1} \\
		\dot{\xi}_2^{p-1} &= L^2_f \xi_1^{p-1}(x) + L_gL_f \xi_1^{p-1}(x) u \\
		\dot{\zeta} &= L_f  \varphi(x) + L_g\varphi(x)u =: f_\zeta(x,u)
	\end{split}
\end{align}
where $x=T^{-1}(\eta,\xi,\zeta)$. The Lie derivatives can be computed
given the state maps from Section \ref{section:Diffeomorphism} and the
class of system \eqref{eq:system_StateSpace}:
\begin{align*}
L^2_f \eta_1(x) &= \frac{\eta_2}{\norm{\sigma'(\lambda^*)}} \left\langle \vect{e}_1'(\lambda^*), J(x_c)x_v \right\rangle \\& \qquad + \left\langle \vect{e}_1(\lambda^*) ,  \frac{\partial (J(x_c)x_v)}{\partial x_c}x_v + J(x_c)f_v(x) \right\rangle \\
L_gL_f\eta_1(x) &= \vect{e}_1(\lambda^*)^\top J(x_c)g_v(x_c) \\
L^2_f \xi_1^{j-1}(x) &= \left\langle \vect{e}_j(\lambda^*) ,  \frac{\partial (J(x_c)x_v)}{\partial x_c}x_v + J(x_c)f_v(x) \right\rangle 
\\& \qquad+ \frac{\eta_2}{\norm{\sigma'(\lambda^*)}}\left\langle \vect{e}_j'(\lambda^*) ,2J(x_c)x_v - \vect{e}_1\eta_2 \right\rangle 
\\& \qquad \quad + \left\langle h(x)-\sigma(\lambda^*) , \vect{e}_j''(\lambda^*)\left(\frac{\eta_2}{\norm{\sigma'(\lambda^*)}}\right)^2  \right. \\+ \vect{e}_j'(\lambda^*) & \left.\left( \frac{L_gL_f\eta_1(x)}{\norm{\sigma'(\lambda^*)}}  - \eta_2^2\frac{\inner{\sigma'(\lambda^*)}{\sigma''(\lambda^*)}}{\norm{\sigma'(\lambda^*)}^4} \right) \right\rangle \\
L_gL_f \xi_1^{j-1}(x) &= \vect{e}_j(\lambda^*)^\top J(x_c)g_v(x_c) \\
	& \qquad+(h(x)-\sigma(\lambda^*))^\top\vect{e}_j'(\lambda^*) \frac{L_gL_f\eta_1(x)}{\norm{\sigma'(\lambda^*)}}
\end{align*}
for $j \in \set{2,\ldots,p}$ and $\lambda^*=\varpi \circ
h(x)$.
Physically, see their definitions in Section
\ref{section:coordinate_transform}, the $\eta$-subsystem represents
the dynamics tangent to the path, and the $\xi$-subsystem represents
the dynamics transversal to the path. Thus, the redundant dynamics
$f_\zeta$ represent the dynamics of the system
\eqref{eq:system_StateSpace} that do not produce any output motion.


It then follows from Lemma \ref{lemma:linear_independent_diff} that the decoupling matrix 
\begin{align*}
	&\beta(x) \coloneqq \begin{bmatrix}
		L_gL_f\eta_1(x) \\
		L_gL_f\xi_1^1(x) \\
		\vdots \\
		L_gL_f\xi_1^{p-1}(x)
	\end{bmatrix} = \\
		&\begin{bmatrix}
		\vect{e}_1(\lambda^*)^\top \\
		\frac{\left( h(x)-\sigma(\lambda^*) \right)^\top}{\norm{\sigma'(\lambda^*)}} \vect{e}_2'(\lambda^*)\vect{e}_1(\lambda^*)^\top 
			+ \vect{e}_2(\lambda^*)^\top \\
			\vdots \\
		\frac{\left( h(x)-\sigma(\lambda^*) \right)^\top}{\norm{\sigma'(\lambda^*)}} \vect{e}_p'(\lambda^*)\vect{e}_1(\lambda^*)^\top
			+ \vect{e}_p(\lambda^*)^\top 
	\end{bmatrix} J(x_c) g_v(x_c)
\end{align*}
has full rank for all $x \in U$. This is equivalent to saying that for
each $x \in U$, system \eqref{eq:system_StateSpace} with output
$(\eta_1(x), \xi_1^1(x), \ldots , \xi_1^{p-1}(x))$ has a vector
relative degree of $\set{2,\ldots,2}$ and we can perform partial
feedback linearization. Let
\begin{equation} \label{eq:alpha_x}
\alpha(x) \coloneqq \begin{bmatrix}
L_f^2\eta_1(x) & L_f^2\xi_1^1(x) & \dots & L_f^2\xi_1^{p-1}(x)
\end{bmatrix}^\top
\end{equation} and introduce an auxiliary input $v = \begin{bmatrix}
  v_\eta & v_\xi
\end{bmatrix}^\top \in \Real \times \Real^{p-1}$ such that
\begin{equation} \label{eq:feedback_transform}
	v = \beta(x)u + \alpha(x)
\end{equation}
the dynamics \eqref{eq:dynamics_unsimplified} become
\[
\begin{array}{lll}
\dot{\zeta} = {f}_\zeta(x,u) \qquad & \dot{\eta}_1 = \eta_2   \qquad  &\dot{\xi}_1^j =  \xi_2^j\\
& \dot{\eta_2} = v_\eta    &\dot{\xi}_2^j =v_{\xi^j}\\
\end{array}
\]
where $j \in \set{1, \ldots, p-1}$ and $x=T^{-1}(\eta,\xi,\zeta)$.

In $U$ the $\xi$-subsystem is linear and controllable, and can be
stabilized to ensure attractiveness and invariance of the path
$\mathcal{P}$. One can do this using a simple PD controller
$v_{\xi^j} = -K^j_P\xi_1^j - K^j_D\xi_2^j$,
$j \in \set{1, \ldots, p-1}$, for positive $K^j_P,K^j_D$. Another
transversal controller is \eqref{eq:Robust_outer_control_Khalil} which
is used on the experimental platform to overcome modelling
uncertainties. Doing so will stabilize the largest controlled
invariant subset of $\Gamma$ known as the path following manifold
\cite{TFL:hladio2011path}. The $\eta$-subsystem is also linear and
controllable, thus a tangential controller can be designed for
$v_\eta$ to track some desired tangential position or velocity profile
$\eta^\mathrm{ref}(t)$. A block diagram of the entire control system
can be seen in Figure \ref{fig:block_diagram}.

\begin{remark} \label{remark:eta2ref_trajectory_discontinuous} Suppose
  a desired tangential velocity trajectory is to be tracked,
  $\eta^\mathrm{ref}_2(t)$. In the case that Assumption
  \ref{assumption:path_smooth} fails for $i>0$,
  $\eta^\mathrm{ref}_2(t)$ can be designed such that it approaches
  zero at the junction of the splines, then ramping up to, for
  instance, some desired steady state value just after the
  junction. This way, the control input remains continuous and the
  system avoids unwanted transients associated with the differential
  discontinuities of the path.\hfill $\bullet$
\end{remark}

\subsection{Redundancy Resolution} \label{section:redundancy_resln}
If $n=2p$, there is no $\varphi$ map and thus no redundant dynamics. However, if $n>2p$, then the system has internal dynamics. Unlike kinematic redundancy resolution techniques \cite{spong2006robot}, we seek a scheme at the dynamic level that can incorporate actuator constraints while ensuring the internal dynamics of the system are bounded.  The approach should also work when the joint-space and task-space dynamics are not necessarily related by a static input conversion map (as is the case for torque-input model of a robot manipulator \cite{Khatib1987unified}). Such systems include a combined motor and manipulator model (see Section \ref{section:Experiment}, where a dynamic relation exists between motor input voltage and end-effector force), or mobile manipulator systems. 

Suppose the $\xi$-subsystem is stabilized and the $\eta$-subsystem is tracking a desired profile $\eta^\text{ref}(t)$ by some outer loop controller for $v$. The zero-dynamics correspond to the redundant dynamics $\dot{\zeta} = {f}_\zeta(x,u)$ under the transform \eqref{eq:feedback_transform}, $x=T^{-1}(\eta^\text{ref}(t),0,\zeta)$ and $v_\eta=\dot{\eta}_2^\text{ref}(t),v_\xi=0$. In order to ensure a well-behaved response, we must ensure boundedness of the zero dynamics.

With $v$ generated by some outer loop controller, there is some
freedom in the choice of the control input $u$ under the feedback
transform \eqref{eq:feedback_transform} (recall $v\in\Real^p$,
$u\in\Real^{N=n/2}$ and $n > 2p$); this freedom can be used to enforce
boundedness of the zero dynamics. Since the relation from $u$ to the
states is dynamic, one can use dynamic programming to achieve some
desired (and bounded) trajectory in the zero dynamics. However, this
in general is quite complicated. This is why previous works on path
following have relied on the zero dynamics being absent or assumed to
be stable. To make things simple, consider the static optimization
\begin{align} \label{eq:rendundnacy_optimization}
	\min_u \left( u - r(x) \right)^\top W \left( u - r(x) \right)
\end{align}
under the constraint \eqref{eq:feedback_transform}. It is a static
minimization of a quadratic function of $u$ under a linear constraint,
or which a closed form solution for $u$ can be solved for using
Lagrange Multipliers. The matrix $W \in \Real^{n \times n}$ is an
invertible weighting matrix, and the function
$r : \Real^n \rightarrow \Real^n$ is used to bias the control input
$u$ to achieve desired behaviour in the zero dynamics. For example, if
$r(x) \equiv 0$, then we are minimizing control effort while staying
on the path and following some desired $\eta^\text{ref}(t)$.

Most real systems have actuation and configuration limits. Thus,
another useful criteria for redundancy resolution is to design $u$ so
that the configuration variables $x_c$ are driven away from their
limits. For example, consider a joint on a manipulator. If this joint
is at its minimum value, setting the control effort corresponding to
this joint to be the maximum control effort will \textit{likely}
increase the value of the joint, thereby pushing it away from the
negative joint limit. The corresponding $r$ function to achieve this
behaviour in \eqref{eq:rendundnacy_optimization} is
\begin{align} \label{eq:r_for_optimization}
	r(x_c)_i =  -\frac{ u_{i_{max}}-u_{i_{min}} }{ x_{c_{i_{max}}} - x_{c_{i_{min}}} } (x_{c_i} - x_{c_{i_{min}}}) + u_{i_{max}},
\end{align}
for $i \in \set{1, \ldots, N}$, or graphically shown below in Figure
\ref{fig:r_xc}.
\begin{figure}[h!]
	\centering
	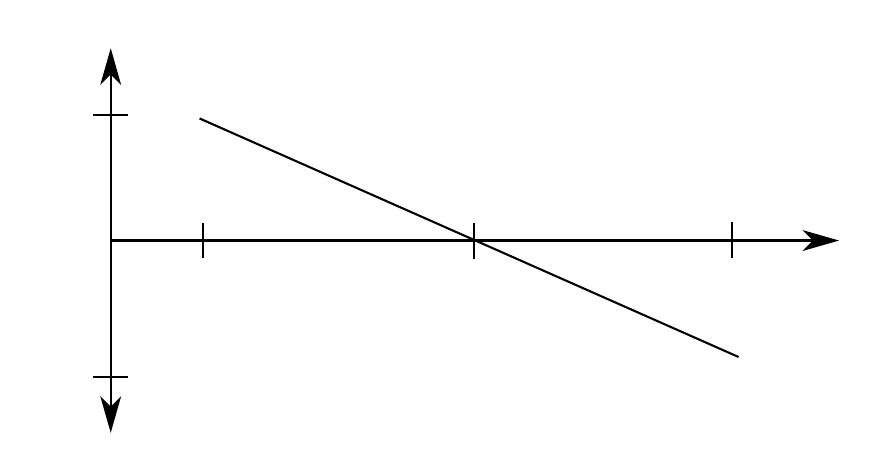
	\caption{The function $r(x_c)$ for avoiding joint and actuation limits.}
	\label{fig:r_xc}
\end{figure}

Using Lagrangian Multipliers, the solution to \eqref{eq:rendundnacy_optimization} under constraint \eqref{eq:feedback_transform} is
\begin{equation} \label{eq:u_function}
	u = \beta^\dagger(x)(v - \alpha(x)) + \left(I_{n \times n} - \beta^\dagger(x)\beta(x) \right)r(x)
\end{equation}
where $\beta^\dagger(x) = W^{-1}\beta(x)^\top \left(\beta(x)W^{-1}\beta(x)^\top\right)^{-1}$ and $I_{n \times n}$ is the identity matrix. 

\begin{conjecture}\label{conjecture:stability_zero_dynamics}
Using \eqref{eq:u_function} and \eqref{eq:r_for_optimization} to generate the control input $u$ results in boundedness of the redundant dynamics $\dot{\zeta} = f_\zeta(x,u)$, while achieving path following and maintaining joint limits, in the situation that system \eqref{eq:euler_lagrange_system} has inherent damping. 
\end{conjecture}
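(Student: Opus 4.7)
The plan is to establish boundedness by constructing a Lyapunov function on the zero-dynamics manifold that combines the natural energy of the Euler--Lagrange system with a quadratic ``artificial potential'' induced by the bias $r(x_c)$. First I would substitute the proposed control law \eqref{eq:u_function} into the original Euler--Lagrange dynamics \eqref{eq:euler_lagrange_system}, explicitly including the viscous friction term $F\dot q$ (with $F=F^\top$ positive definite) that the hypothesis provides. This yields a closed-loop system $D(q)\ddot q + C(q,\dot q)\dot q + G(q) + F\dot q = A(q)\bigl(\beta^\dagger(x)(v-\alpha(x)) + (I - \beta^\dagger(x)\beta(x))r(x_c)\bigr)$.

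Next I would restrict to the zero-dynamics manifold: since the outer loop drives $\xi\to 0$ and $\eta\to \eta^{\mathrm{ref}}(t)$, the first component $\beta^\dagger(v-\alpha)$ collapses to a bounded, time-varying forcing (assuming $\eta^{\mathrm{ref}},\dot\eta^{\mathrm{ref}},\ddot\eta^{\mathrm{ref}}$ are bounded, which is the standard setting for a well-posed tangential reference). What remains free is the null-space term $P(x)r(x_c) \coloneqq (I - \beta^\dagger(x)\beta(x))r(x_c)$, which by construction does not disturb the output but acts only on the $N-p$ redundant configuration directions.

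I would then try a candidate $V(q,\dot q) = \tfrac12\dot q^\top D(q)\dot q + P(q) + U(q)$, where $U(q)$ is a quadratic potential chosen so that $A(q)^\top \nabla U(q)$ matches, up to projection onto $\ker\beta(x)$, the linear field $-r(x_c)$ given by \eqref{eq:r_for_optimization}. Using the standard skew-symmetry of $\dot D - 2C$, the derivative $\dot V$ reduces to $-\dot q^\top F\dot q$ plus cross terms involving the bounded path-tracking forcing. The damping term is the critical dissipation: it absorbs, via Young's inequality, the bounded forcing from the path-following channel, while the null-space component paired with $\nabla U$ is designed to be negative semidefinite and to pull $x_c$ toward the centre of the joint range where $r$ vanishes. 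The conclusion $\norm{(q,\dot q)}$ bounded would then follow by standard Lyapunov arguments, which in transformed coordinates gives boundedness of $\zeta$.

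The main obstacle -- and the reason the statement is posed as a conjecture -- is that the projector $P(x)=I-\beta^\dagger(x)\beta(x)$ is state-dependent, not symmetric in general (it depends on $W$), and the vector field $P(x)r(x_c)$ is not a gradient field, so a global artificial potential $U$ whose gradient reproduces $P(x)r(x_c)$ in every direction need not exist. Two reasonable routes forward are: (i) specialise to weightings $W$ (for instance $W=I$ or $W$ aligned with the inertial metric $A^\top D^{-1}A$) for which $P(x)$ admits a variational interpretation and the integrability obstruction disappears; or (ii) abandon the strict-Lyapunov programme and prove boundedness via an input-to-state-stability argument, using damping to establish an a priori bound on $\dot q$, the explicit saturating form of $r$ in \eqref{eq:r_for_optimization} to show that $q$ cannot drift to the joint limits without the bias pushing it back, and a contradiction argument on the hypothetical escape time. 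Making either route fully rigorous -- and, in particular, handling the coupling between the state-dependent projector and the Coriolis nonlinearity without extra structural assumptions on $\beta$ and $A$ -- is the crux of the proof.
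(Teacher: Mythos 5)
You should first be aware that the paper does not prove this statement: it is deliberately posed as a conjecture, supported only by two worked examples (Examples~\ref{example:stability_1} and~\ref{example:stability_2}) and the experimental results, with the authors explicitly deferring ``a rigorous proof'' to future work. So there is no paper proof to match your argument against; the relevant comparison is with the evidence the paper does offer. On that score your sketch is consistent with, and more ambitious than, what the paper does. In Example~\ref{example:stability_1} the authors show exactly the mechanism you describe in its simplest instance: on the zero-dynamics manifold the null-space component of \eqref{eq:u_function} reduces to the linear, negatively sloped bias $r$ of \eqref{eq:r_for_optimization} acting on the redundant coordinate, and together with the viscous friction $b_1$ this yields a damped, exponentially stable second-order system with equilibrium at the midpoint of the joint range. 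Your Lyapunov candidate $V=\tfrac12\dot q^\top D(q)\dot q+P(q)+U(q)$ with an artificial potential $U$ whose gradient reproduces the bias is precisely the natural generalisation of that calculation, and your use of skew-symmetry of $\dot D-2C$ plus the damping term to absorb the bounded tangential forcing is the standard and correct machinery.

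That said, your proposal is a programme, not a proof, and you have correctly identified why it stalls: the null-space term $(I-\beta^\dagger(x)\beta(x))r(x_c)$ involves a state-dependent, $W$-weighted, generally non-symmetric projector composed with the input map $A(q)$ and the inverse inertia, so the resulting generalized force on the redundant directions is not a gradient field and no global $U$ need exist; moreover the projector itself depends on the transversal/tangential state through $\beta$, so even restricting to the zero-dynamics manifold does not freeze it. The paper's own Example~\ref{example:stability_2} quietly illustrates a further obstruction you should fold into your analysis: the zero dynamics there have \emph{two} equilibria, one of which (the singular configuration) is unstable, so any correct statement can only be about boundedness and local asymptotic stability of the desired equilibrium, not global convergence --- your ISS/contradiction route (ii) is therefore the more promising one, but it must accommodate the possible coexistence of unstable equilibria and verify that $\beta$ retains full rank along the trajectories considered. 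In short: no gap relative to the paper (which proves nothing here), an honest and well-diagnosed gap relative to an actual proof of the conjecture.
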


Conjecture \ref{conjecture:stability_zero_dynamics} has been tested and verified on the following two examples \ref{example:stability_1}-\ref{example:stability_2}, and on our experimental platform in Section \ref{section:Experiment}. 

\begin{example}[2-DOF linear system]
\label{example:stability_1}
Consider the linear system in Figure \ref{fig:2dof_figure} with output
$y \in \Real$ taken to be the position of the top-most block. Let
$m_i$, $b_i$ and $u_i$ be the mass, coefficient of friction, and force
acting on each respective block.
\begin{figure}[h!]
	\centering
	\caption{A Planar 2-DOF fully-actuated system where $m_1,m_2,b_1,b_2 > 0$.}
	\label{fig:2dof_figure}
\end{figure}
The corresponding dynamics are 
\begin{align*}
	\dot{x} &= \begin{bmatrix}
		0 & 1 & 0 & 0 \\
		0 & -\frac{b_1+b_2}{m_1} & 0 & \frac{b_2}{m_1} \\
		0 & 0 & 0 & 1 \\
		0 & \frac{b_2}{m_2} & 0 & -\frac{b_2}{m_2} 
	\end{bmatrix}x + \begin{bmatrix}
		0 & 0 \\
		\frac{1}{m_1} & 0 \\
		0 & 0 \\
		0 &  \frac{1}{m_2}
	\end{bmatrix}u \\
	&=: Ax + Bu \\
	y &= \begin{bmatrix}
		0 & 0 & 1 & 0
	\end{bmatrix}x.			
\end{align*}

For this particular example, $N=2,p=1$. Thus $p-1=0$ and there are no transversal dynamics. Therefore, under our path following context, the position of the output along the path is $\eta_1 = x_3$. The remaining function $\varphi : \Real^4 \rightarrow \Real^2$ is chosen to complete a diffeomorphism. For instance
\begin{align*}
	T : \Real^4 &\rightarrow \Real^4 \\
	x &\mapsto \left(
	 \eta_1(x),
	 \eta_2(x),
	 \zeta_1(x),
	 \zeta_2(x)\right) = \left(
	x_3,
	x_4 ,
	x_1,
	x_2\right)
\end{align*}
yields the dynamics in the transformed coordinates
\begin{align*}
 	\dot{\eta}_1 &= \eta_2 \\
 	\dot{\eta}_2 &= \frac{b_2}{m_2}(\zeta_2-\eta_2) + \frac{1}{m_2}u_2 =: v_\eta \\
 	\dot{\zeta}_1 &= \zeta_2 \\
 	\dot{\zeta}_2 &= -\left(  \frac{b_1+b_2}{m_1}  \right)\zeta_2 + \frac{b_2}{m_1}\eta_2 + \frac{1}{m_1}u_1.
\end{align*}
The corresponding terms in the feedback transform \eqref{eq:feedback_transform} are $\alpha(x) = \frac{b_2}{m_2}(x_2-x_4)$ and $\beta(x) = \begin{bmatrix}
0 & \frac{1}{m_2} 
\end{bmatrix}$. Now assume the desired tangential reference trajectory is 
\begin{equation*}
	\eta^\mathrm{ref}(t) = \begin{bmatrix}
		\eta_1^\mathrm{ref} \\
		0
	\end{bmatrix}
\end{equation*}
for some constant $\eta_1^\mathrm{ref}$. The control law $v_\eta = K_P(\eta_1^\mathrm{ref}-\eta_1) + K_D(-\eta_2)$ for positive $K_P, K_D$ will force $\eta$ to converge to $\eta^\mathrm{ref}$. Thus the zero dynamics are the redundant dynamics when $\eta=\eta^\mathrm{ref}$ and $\alpha(x)+\beta(x)u=v_\eta=0$, i.e. 
\begin{align*}
 	\dot{\zeta}_1 &= \zeta_2 \\
 	\dot{\zeta}_2 &=  -\left(  \frac{b_1+b_2}{m_1}  \right)\zeta_2 + \frac{1}{m_1}u_1.
\end{align*}
Using the controller \eqref{eq:u_function} yields the zero dynamics 
\begin{align*}
 	\dot{\zeta}_1 &= \zeta_2 \\
 	\dot{\zeta}_2 &=  -\left(  \frac{b_1+b_2}{m_1}  \right)\zeta_2 + \frac{1}{m_1}r(x)_1.
\end{align*}
and thus the zero dynamics can be made globally exponentially stable if, for example, $r(x)_1$ is a linear function of $\zeta_1=x_1$ with negative slope. This is precisely the case with our proposed $r$ function \eqref{eq:r_for_optimization}. Furthermore, the equilibrium point will be $\zeta_2=0$, and the value of $x=T^{-1}(\eta^\mathrm{ref},\zeta_1,\zeta_2=0)$ for which $r(x)_1$ vanishes. In our proposed $r$ function, this occurs when $\zeta_1=x_1=\frac{u_{1_{max}}}{u_{1_{max}}-u_{1_{min}}}(x_{1_{max}}-x_{1_{min}})+x_{1_{min}}$. If $u_{1_{max}}=-u_{1_{min}}$, which usually is the case for practical examples, then the equilibrium point is  $\zeta_1=\frac{1}{2}(x_{1_{min}}+x_{1_{max}})$, the average value of the joint limits. Thus, the path following constraints are met and the zero dynamics are stabilized while achieving the objective of staying within joint limits using \eqref{eq:r_for_optimization}.
\end{example}

\begin{example}[3-DOF planar manipulator]\label{example:stability_2}
Consider a planar 3-DOF robot manipulator and the desired path of a circle, and the point $\eta_1=0$ on the path is to be stabilized, shown in Figure \ref{fig:3dof_figure}.
\begin{figure}[h!]
	\centering
	\includegraphics[width=\linewidth]{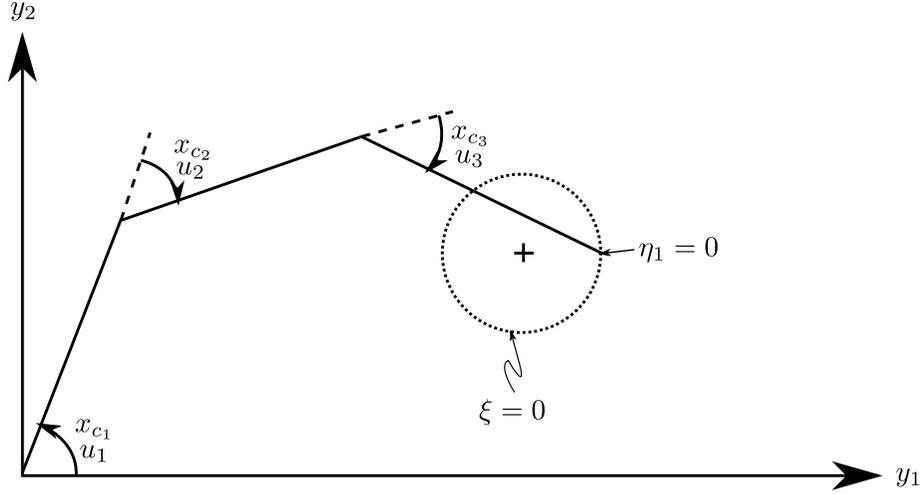}
	\caption{A Planar 3-DOF fully-actuated manipulator whose task is to stabilize a point on a circle.}
	\label{fig:3dof_figure}
\end{figure}

Here $N=3,p=2$ and thus the dimension of redundancy is $1$. Let's
further assume that each link has unit mass and inertia, with the
center of mass of each link be at the midpoint of each link. Without
loss of generality, we assume gravity is ignored. The dynamics can be
derived as per \cite{spong2006robot} to yield the form
\eqref{eq:euler_lagrange_system}.

Let the redundant state be
$\zeta_1 = \zeta_1(x) = x_{c_1} + x_{c_2} + x_{c_3}$, the sum of the
joint angles, the angle the end-effector makes with the ground. Thus
$\zeta_2 = x_{v_1} + x_{v_2} + x_{v_3}$, and
$\dot{\zeta}_2 =
\left.\sum_{i=1}^3\left[f(x)+g(x)u\right]_i\right|_{x=T^{-1}(\eta,\xi,\zeta)}$.
The zero dynamics correspond to these redundant dynamics when
$\xi = 0, \eta=0$ and thus $v=0$.  Stability of the zero dynamics
depends on the choice of $u$.

Suppose that $u$ is chosen as in \eqref{eq:u_function} with $r$ from \eqref{eq:r_for_optimization}. The parameters for the $r$ function are $u_{i_{max}} = -u_{i_{min}} = 10$ for $i=1,2,3$. The joint limits are chosen so that $x=T^{-1}(\eta=\eta^\mathrm{ref},\zeta_1=0,\zeta_2=0)$ is the average value of the joint limits. This is done to see if $\zeta=(0,0)$ will become an equilibrium point for the zero dynamics.

The resulting phase curves for the zero dynamics for the 3-DOF simple robot case with viscous friction can be found in Figure \ref{fig:simple_phasecurve}.
\begin{figure}[h!]
	\centering
	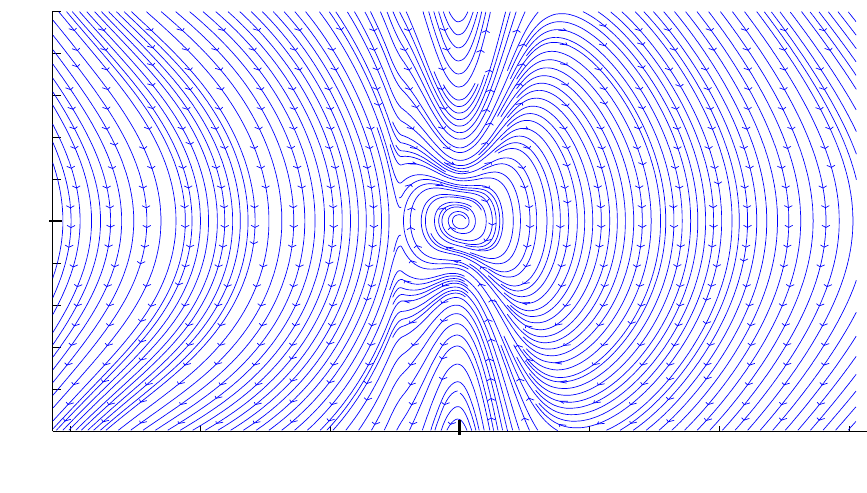
	\caption{Phase portrait of the zero dynamics for the 3-DOF manipulator example with $G=0$.}
	\label{fig:simple_phasecurve}
\end{figure}

Notice there are two equilibrium points in the zero dynamics phase
plane: One is unstable and one that is asymptotically stable. The
unstable equilibrium point is the singularity configuration of the
manipulator on the zero dynamics manifold. The asymptotically stable
equilibrium point is located at $\zeta=(0,0)$, which corresponds to
the joint positions being at the midpoint of their joint
limits. 
Furthermore, if the system were to have no viscous friction,
$\zeta=(0,0)$ will no longer be asymptotically stable. 
\end{example}

\begin{remark}
The proposed approach assumes that the system is fully actuated.  To examine the under-actuated system case, let $m$ be the number of inputs, $p$ be the dimension of the output space, and $N$ be the degrees of freedom of the system. There is a basic requirement (\cite{TFL:hladio2011path}) that $m \geq p-1$ in order to only follow the path, and $m \geq p$ in order to follow the path and move along it in a desired fashion. 

If $N > m = p$, it is still possible to follow the path in a desired fashion, however the internal dynamics would be completey uncontrollable, since all the freedom over the control input will be used to stay on the path and traverse the path. If $N = m > p$, then Conjecture \ref{conjecture:stability_zero_dynamics} applies. If $N > m > p$, then there will be some uncontrolled internal dynamics and the proposed redundancy resolution approach may not work to ensure boundedness of these internal dynamics. \hfill $\bullet$
\end{remark}
\begin{remark}
In the case that the system hits a singularity ($J(x)$ loses rank), the controller will fail as the decoupling matrix $\beta$ loses rank, and there will no longer be a solution to the feedback transformation \eqref{eq:feedback_transform} or \eqref{eq:u_function}. The proposed methodology assumes that $J$ is non-singular by the class of systems (Section \ref{section:class_of_systems}). Practically, one just has to ensure $J(x)$ is non-singular in the neighbourhood of $\Gamma=h^{-1}(\mathcal{P})$. \hfill $\bullet$
\end{remark}

\subsection{Continuity of control input}
The tangential and transversal dynamics using the linearizing control input $v$ become a series of double integrators. Stabilizing such dynamics can be done using linear control techniques. For the path $\mathcal{P}$ it would be nice for the control input $u$ to remain continuous as the output moves from one curve to another within $\mathcal{P}$. 
\begin{theorem}
	If the auxiliary input $v$ is chosen to be a continuous function of $\bar{T}_k(x)$ with $k=k^*$ selected by Algorithm \ref{Algorithm:grad_descent}, then the control input $u$ is continuous for the entire splined path $\mathcal{P}$. 
\end{theorem}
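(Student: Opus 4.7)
The plan is to exhibit continuity of $u$ as a composition of continuous maps, treating the two distinct sources of possible discontinuity separately: (i) the intrinsic $x$-dependence for a \emph{fixed} spline index $k^*$, and (ii) the switching of $k^*$ produced by Algorithm~\ref{Algorithm:grad_descent}. For source (i) the expression in \eqref{eq:u_function} is a composition of smooth maps in $x$ (because $\sigma_k$, $h$, $J$, $g_v$, $\alpha_k$, $\beta_k$, and $r$ are all smooth for fixed $k$, and $\beta_k$ has full row rank on $U$ by Lemma~\ref{lemma:linear_independent_diff}, so the pseudoinverse $\beta_k^\dagger$ is smooth there). Hence $u$ is continuous in $x$ away from the switching instants; the burden is to check matching across a transition.

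For source (ii), I would fix the hyperplane where $k^*$ increments from $k$ to $k{+}1$ (the one depicted in Figure~\ref{fig:proposition}), and verify that each of the four ingredients of \eqref{eq:u_function} — namely $v$, $\alpha_k(x)$, $\beta_k(x)$, and $r(x)$ — agrees in its $k$-th and $(k{+}1)$-st incarnations whenever $x$ lies in $\{x : \varpi_k\circ h(x)=\lambda_{(k,\max)}\}$. The hypothesis gives $v$ as a continuous function of $\bar T_k(x)$, and Proposition~\ref{proposition:continuous_states} states $\bar T_k(x)=\bar T_{k+1}(x)$ at the transition, so $v$ is continuous through the switch. The function $r$ in \eqref{eq:r_for_optimization} depends only on $x_c$, not on $k$, so it is trivially continuous. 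What remains is continuity of $\alpha_k$ and $\beta_k$ across the switch.

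For $\beta_k(x)$, inspection of its formula shows the $k$-dependence enters through $\vect{e}_{j_k}(\lambda^*)$, $\vect{e}_{j_k}'(\lambda^*)$, and the error vector $h(x)-\sigma_k(\lambda^*)$. At the transition point $\sigma_k(\lambda_{(k,\max)})=\sigma_{k+1}(\lambda_{(k+1,\min)})$ by Assumption~\ref{assumption:path_smooth} with $i=0$, so the error vector matches. The Frenet vectors match and so do their $\lambda$-derivatives by the argument already carried out in the proof of Proposition~\ref{proposition:continuous_states}, which uses Assumption~\ref{assumption:path_smooth} up to order $p$. For $\alpha_k(x)$ (see \eqref{eq:alpha_x} together with the formulas for $L_f^2\eta_1$ and $L_f^2\xi_1^{j-1}$), the same reasoning applies but now the expressions also contain $\vect{e}_{j_k}''(\lambda^*)$ and terms involving $\sigma''$ and $\sigma'''$; invoking Assumption~\ref{assumption:path_smooth} up to order $p+1$ — which is precisely why that order of smoothness was imposed — yields equality of these terms across the stitch and therefore continuity of $\alpha_k(x)$.

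The main obstacle is the bookkeeping in the last paragraph: one must track every occurrence of a $\lambda$-derivative of $\sigma_k$ in $\alpha_k$ and $\beta_k$ and confirm the required order never exceeds $p+1$, and one must also confirm that the generalized curvatures and their first derivatives, which appear through the Frenet--Serret equations, likewise match across the transition. Once this is done, $u$ is a continuous function of two continuous maps ($x$ and the switching selector $k^*$) whose two branches agree on the overlap, hence $u$ is continuous on the entirety of $\mathcal{P}$.
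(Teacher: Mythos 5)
Your proposal is correct and follows essentially the same route as the paper: continuity for a fixed spline index $k$, plus matching of $v$, $\alpha_k$, and $\beta_k$ across the stitch hyperplane, with the state matching supplied by Proposition~\ref{proposition:continuous_states} and the higher derivatives of $\sigma$ appearing in $\alpha_k$ controlled by the $C^{p+1}$ smoothness of Assumption~\ref{assumption:path_smooth}. The one point where you are more thorough is in explicitly verifying continuity of the redundancy-resolution formula \eqref{eq:u_function} (smoothness of $\beta_k^\dagger$ from the full row rank of $\beta_k$ on $U$, and the $k$-independence of $r$), a step the paper only covers with a parenthetical assumption that the optimization is smooth.
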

\begin{proof}
	If $v$ is a continuous function of $\eta$ and $\xi$, then $v$ is continuous if and only if the $\eta$ and $\xi$ states are continuous. This is in fact the case by Proposition \ref{proposition:continuous_states}, thus $v$ is continuous over the entire path. 
	
	The control input $u$ is calculated based on the feedback transform \eqref{eq:feedback_transform} for the $k$'th spline:
	\begin{equation*}
		v = \beta_{k^*}(x)u + \alpha_{k^*}(x)
	\end{equation*}
	
	Since $v$ is continuous, $u$ is continuous if and only if $\beta_{k^*}(x)$ and $\alpha_{k^*}(x)$ are continuous (assuming the optimization to solve for $u$ in the redundant case is also smooth). Equivalently, in a similar manner to Proposition \ref{proposition:continuous_states}, we must show that 
	\begin{itemize}
			\item $\beta_{k}(x)$ and $\alpha_{k}(x)$ are continuous given $k$; 
			\item $\beta_{k}(x)$ and $\alpha_{k}(x)$ are continuous at the transistion of $k$: \begin{multline*}
		( \forall x \in \{x \in \Real^n | x  = (\varpi_k \circ h(x))^{-1}(\lambda_{(k,max)})\} ) \\ \beta_{k}(x)= \beta_{k+1}(x) \;,\; \alpha_{k}(x) =  \alpha_{k+1}(x)
		\end{multline*}
	\end{itemize}
	
	Using the definition for $\beta_k(x)$ from Section \ref{sec:dynamics_and_control},
	the term $J(x_c)g_v(x_c)$ is continuous over the path (by system assumptions) and is not dependent on the spline $k$. The remaining terms are all used in $\bar{T}_k(x)$, and thus it follows from Proposition \ref{proposition:continuous_states} that $\beta_k(x)$ is continuous.  
	
	
	Similarly, using the definition for $\alpha_k(x)$ from Section \ref{sec:dynamics_and_control}, the terms 
        $\vect{e}_{j_k}''(\lambda^*)$, $j \in \set{1,\ldots ,p}$ are
        continuous for a given spline $k$ because by using the FS
        formulae from Proposition \ref{proposition:continuous_states},
        the highest derivative that appears will be
        $\sigma_k^{(p+1)}(\lambda^*)$; our path is $C^{p+1}$ by
        Assumption \ref{assumption:path_smooth}. The remaining terms
        in $\alpha$ are continuous since they are the same ones in
        $\bar{T}_k(x)$, which were already shown to be continuous in
        Proposition \ref{proposition:continuous_states}.

\end{proof}


\section{Experiment} \label{section:Experiment}

\subsection{System}
The test platform is the Clearpath Robotic Manipulator (CPM) (see Figure \ref{fig:CPM_picture}) 
\begin{figure}[h!]
	\centering
	\includegraphics[width=\linewidth]{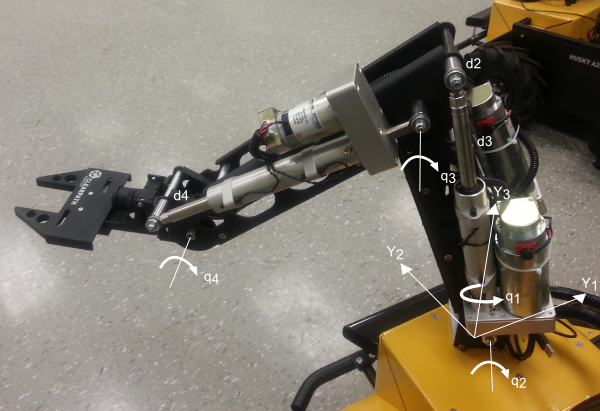}
	\caption{The Clearpath Manipulator (CPM) with the joint coordinates and output space labelled.}
	\label{fig:CPM_picture}
\end{figure}
which is a four degree of freedom, fully actuated system ($N=4$). The shoulder, elbow, and wrist links are actuated by DC linear actuators. The dynamics are derived in \cite{gill2013robust} and is of the form \eqref{eq:euler_lagrange_system}. The dimension of the output space is $p=3$ -- $y_1$ and $y_2$ are the end-effector positions projected on the ground plane, and $y_3$ is the end-effector's height off the ground. The input to the system is the motor voltages. Since $N > p$, the CPM system is redundant.

\subsection{Path Generation}
Waypoints in the 3-dimensional workspace were specified \textit{a-priori}, and a standard spline-fitting process using polynomials \cite{Erkorkmaz2001trajandspline} was used to fit splines through these way points while ensuring fourth-derivative continuity. Thus, quintic polynomial splines are required to fit the waypoints. 

A total of 15 way points are specified. The way points were placed to create self-intersections in the path to show the robustness of Algorithm \ref{Algorithm:grad_descent}. Mathematically, our path is regular and $C^4$, satisfies Assumptions \ref{assumption:path_smooth} and \ref{assumption:path_framed}, and is given as the following after spline interpolation
\begin{align*}
	\sigma_k : \mathbb{I}_k &\rightarrow \Real^3, \qquad k = 1,2,...,14 \\
		\lambda &\mapsto \begin{bmatrix}
							  \sum_{i=0}^{5} a_i \lambda^i \\
							  \sum_{i=0}^{5} b_i \lambda^i \\
							  \sum_{i=0}^{5} c_i \lambda^i
						  \end{bmatrix},
\end{align*}
where $\mathbb{I}_k = [0,l_k]$, and $l_k$ is the chord length between way point $k$ and $k+1$. 

\subsection{Control Design}
The first step in the control design is to define the coordinate transformation. The $k^*$ and $\lambda^*$ are determined by Algorithm \ref{Algorithm:grad_descent}. The initial guess for Algorithm \ref{Algorithm:grad_descent} is determined a-priori by solving the optimization globally using brute force and given the (known) initial output position $y_0 = h(x_0)$. Then, when the real time control law is run, Algorithm \ref{Algorithm:grad_descent} is used to determine the $k^*$ and $\lambda^*$ at each time step. The coordinate transformation follows from Section \ref{section:Diffeomorphism}. Since $n-2p=2$, we need a map $\varphi_k: \Real^n \rightarrow \Real^2$ to complete the diffeomorphism. We let $\varphi_{1_k}(x) = x_{c_2} + x_{c_3} + x_{c_4}$ and $\varphi_{2_k}(x) = x_{v_2} + x_{v_3} + x_{v_4}$ for all $k$ to represent the angle and its rate, respectively, of the end-effector with respect to the ground plane. It can be shown that this completes the diffeomorphism.

The dynamics in the transformed space are then 
\begin{align*}
	\begin{split}
		\dot{\eta}_1 &= \eta_2 \\
		\dot{\eta}_2 &= L^2_f \eta_{1_k^*}(x)  + L_gL_f \eta_{1_k^*}(x) u \\
		\dot{\xi}_1^1 &= \xi_2^1 \\
		\dot{\xi}_2^1 &= L^2_f \xi_{1_k^*}^1(x) + L_gL_f \xi_{1_k^*}^1(x) u \\	
		\dot{\xi}_1^2 &= \xi_{1_k^*}^2 \\
		\dot{\xi}_2^2 &= L^2_f \xi_{1_k^*}^2(x) + L_gL_f \xi_{1_k^*}^2(x) u \\
		\dot{\zeta}_1 &= \zeta_2 \\
		\dot{\zeta}_2 &= \sum_{i=2}^4 [f_v(x)+g_v(x)u]_i	
	\end{split}
\end{align*}
where $x=T_{k^*}^{-1}(\eta,\xi,\zeta)$, $k^*$ is chosen by Algorithm \ref{Algorithm:grad_descent}, and $\dot{\zeta}$ represent the redundant dynamics for path following. 

Setting $v=\beta_{k^*}(x)u + \alpha_{k^*}(x)$ as previously defined in \eqref{eq:feedback_transform}, the following partially linearised system is obtained:
\begin{align*}
		\dot{\eta_1} &= \eta_2        &\dot{\xi_1^1} &= \xi_2^1       &\dot{\xi_1^2} &= \xi_2^2\\
		\dot{\eta_2} &= v_\eta    	  &\dot{\xi_2^1} &=v_{\xi^1}      &\dot{\xi_2^2} &= v_{\xi^2}\\
		\dot{\zeta} &= \bar{f}_\zeta(x,v)			
\end{align*}
where $x=T_{k^*}^{-1}(\eta,\xi,\zeta)$ and $\bar{f}_\zeta$ represents the redundant dynamics based on our choice of control input $u$. The desired task is to track a constant tangential velocity profile (i.e., to track some $\eta_2^\text{ref}$) while staying on the path (force $\xi$ to go to zero). Designing a controller for $v$ is a linear control design problem; the one found in \cite{gill2013robust} is used as it was shown to be robust to the modelling inaccuracies. The tangential controller is a PI controller
\begin{align} 
	\begin{split} \label{eq:tangential_PI_controller_v}
		v_\eta(\eta,t) = K_{P} & \left( \eta_2^{\mathrm{ref}}  - \eta_2(t) \right) + \\ & K_{I} \int\limits_0^t \left(  \eta_2^{\mathrm{ref}} - \eta_2(\tau) \right) \mathrm{d}\tau.
	\end{split} 
\end{align}
For the transversal controller, let
$\xi = \left(\xi_1^1,\xi_2^1,\xi_1^2,\xi_2^2\right)$,
$v_\xi = \left(v_{\xi_1},v_{\xi_2}\right)$. Then,
\begin{equation} \label{eq:Robust_outer_control_Khalil}
	v_\xi = \left(K + K_0\right)\xi + \begin{cases}
	  K_1\xi/\|\xi\| & : \|\xi\| \geq \mu > 0 \\
	  K_2\|\xi\|\xi & : \|\xi\| < \mu.
	\end{cases} 
\end{equation}
where $K, K_0,K_1,K_2 \in \mathbb{R}^{3 \times 3}, \mu \in \mathbb{R}$ are constants that depend on a robustness criteria and are used as tuning parameters \cite{gill2013robust,Khalil:esfandiari1992output}. 

To solve for $u$ based on the auxiliary control $v$, the optimization is done as outlined in Section \ref{section:redundancy_resln} using \eqref{eq:r_for_optimization}. The actuation limits are $u_{i_{max}} = -u_{i_{min}} = 50\%$ duty cycle for $i=1,\dots,4$. The joint limit values $x_{c_{i_{max}}},x_{c_{i_{min}}}$ were set to their true limits for joints $i=1,2,3$, and for the wrist joint ($i=4$) two different runs were done, one with limits set to $0-45$ degrees and another to $45-90$ degrees. The results are found in the subsequent section.

We implement the path following controller on the CPM via Labview Real-Time Module$^\copyright$ to control the motor PWM amplifiers and to read the optical encoders. The loop rate of the controller is 20 ms. The encoders read the linear actuator distances,  so an inverse measurement model is used to retrieve the states $x_c$ of the system. The encoder resolution for the waist is $1.3E-5$ rad, and for the linear actuators is $4.2E-5$ inch. The derivative of $x_c$ is numerically computed to retrieve the $x_v$ states using a first-order approximation. The tangential controller \eqref{eq:tangential_PI_controller_v} is the only dynamic part of the control algorithm, and is discretized also using a first-order approximation. 

\subsection{Results}
The results of the path following controllers using both joint limit values can be found in Figures \ref{fig:3d}-\ref{fig:control_effort}. The output of Algorithm \ref{Algorithm:grad_descent} is also shown for reference, with the number of iterations the gradient descent takes to converge at each time step shown. 

The path following controller naturally converges to the closest point on the path due to the explicit stabilization of the path following manifold. 
\begin{figure}[h!]
	\centering
	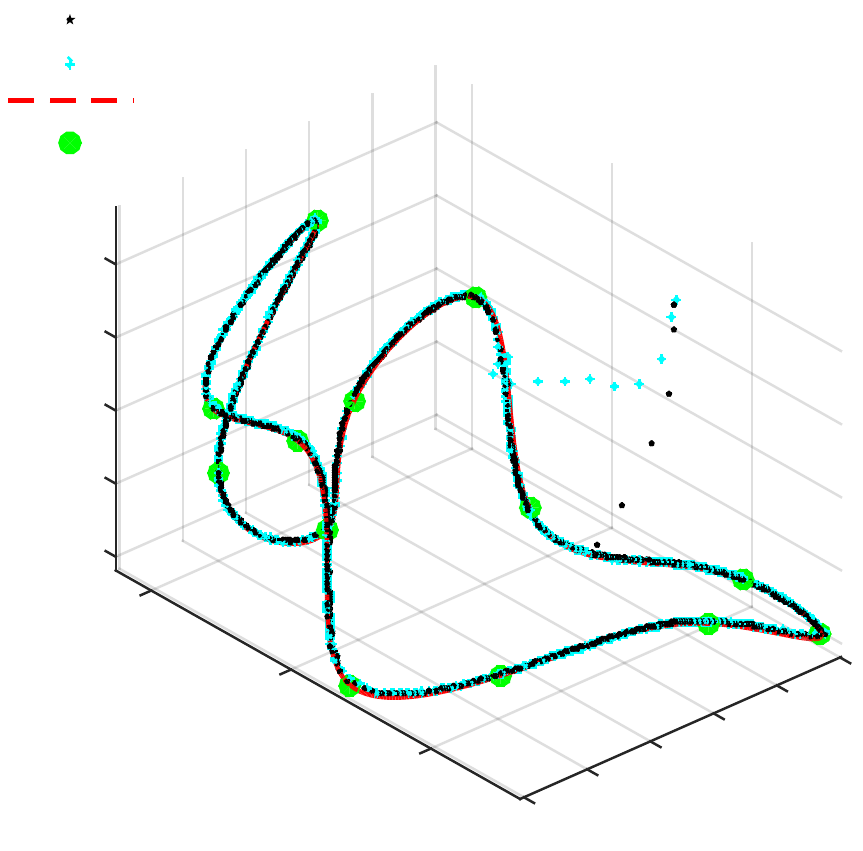
	\caption{Output position.}
	\label{fig:3d}
\end{figure}

\begin{figure}[h!]
	\centering
	\includegraphics[width=\linewidth]{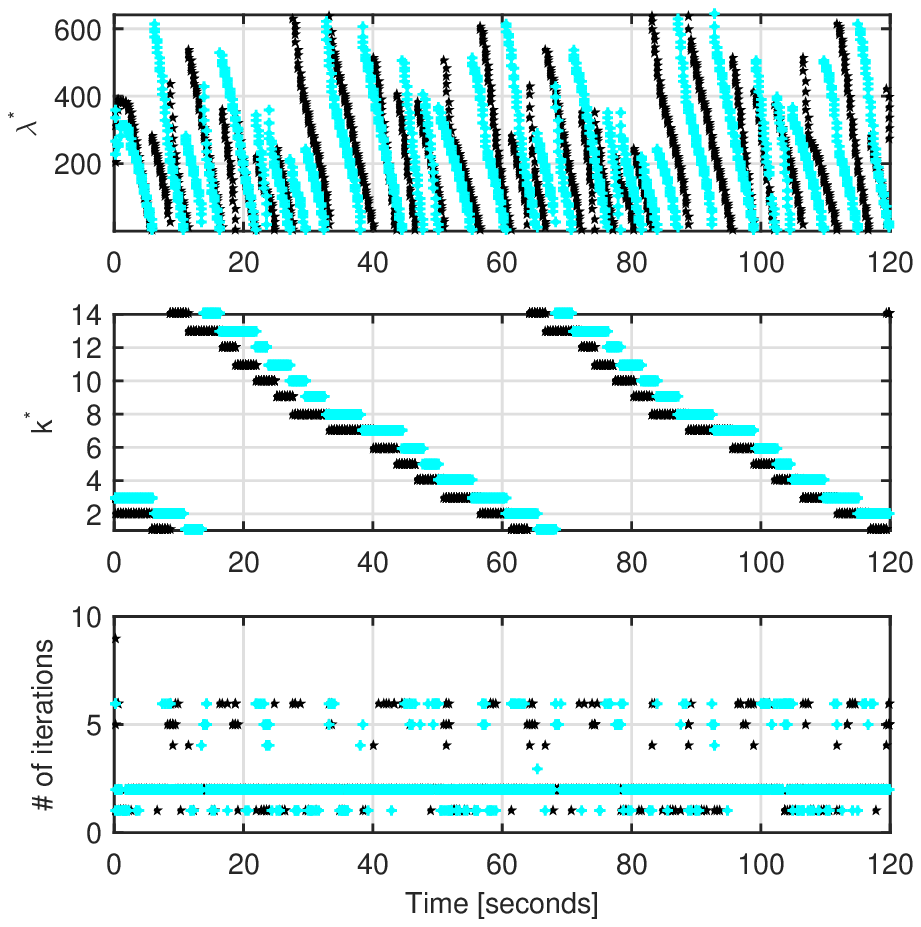}
	\caption{Algorithm \ref{Algorithm:grad_descent} information.}
	\label{fig:aglorithm1}
\end{figure}

Note that in the Cartesian plot, the responses are very similar, with
similar path errors. The transformed state plots also show this as
expected: the controllable $\eta$ and $\xi$ subsystems behave the same
in both scenarios, because these subsystems, under the feedback
transform, do not depend on the redundant state dynamics
($\zeta$). The redundant state $\zeta$ trajectory differs, however,
since we are using the redundancy to maintain different joint angle
limits using
\eqref{eq:rendundnacy_optimization},\eqref{eq:r_for_optimization}. The
plots show well-behaved dynamics and boundedness for $\zeta$ as we had
postulated for systems with inherent damping.
\begin{figure}[h!]
	\centering
	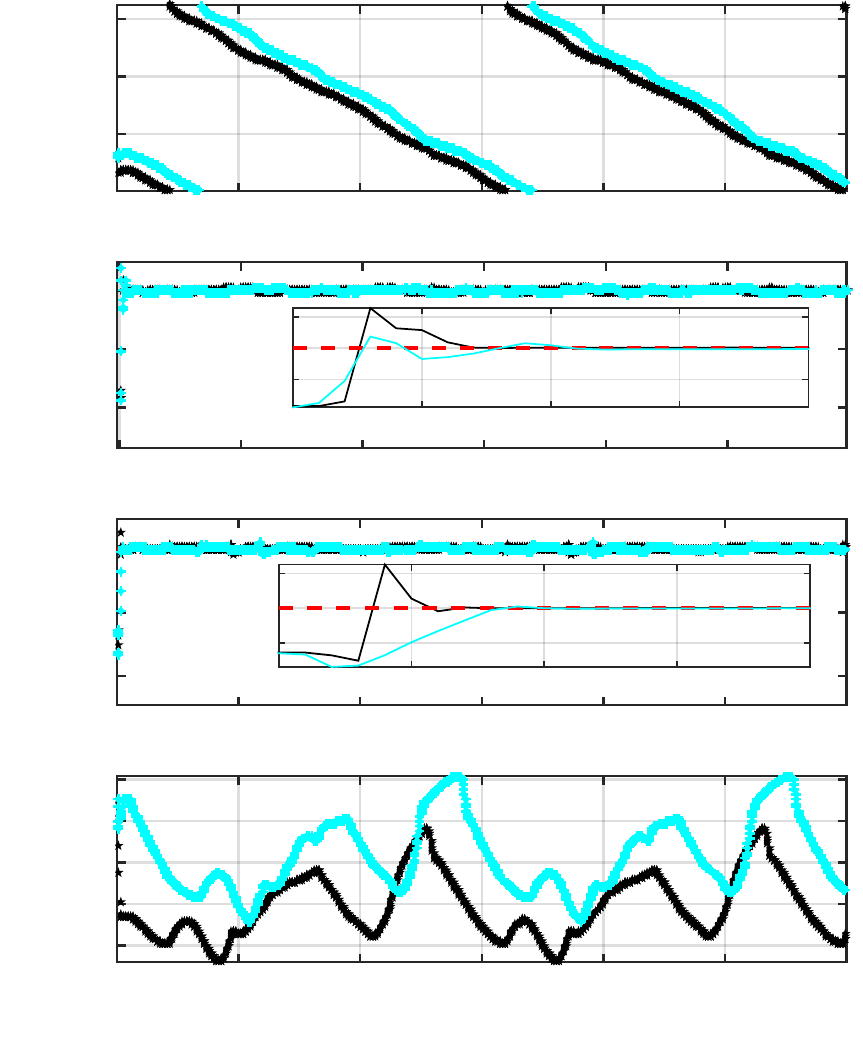
	\caption{Transformed state positions.}
	\label{fig:transformed_pos}
\end{figure}

Closer inspection of the $x_c$ states in Figure \ref{fig:joint_angles} shows that indeed our controller was able to satisfy the respective joint angle limits. The angle trajectories are also slightly different for the first three joints, because maintaining the output on the path with a different preferred position of the wrist forces the angles for the elbow and shoulder links to adjust accordingly. All this is done automatically by \eqref{eq:rendundnacy_optimization},\eqref{eq:r_for_optimization}. 
\begin{figure}[h!]
	\centering
	\includegraphics[width=\linewidth]{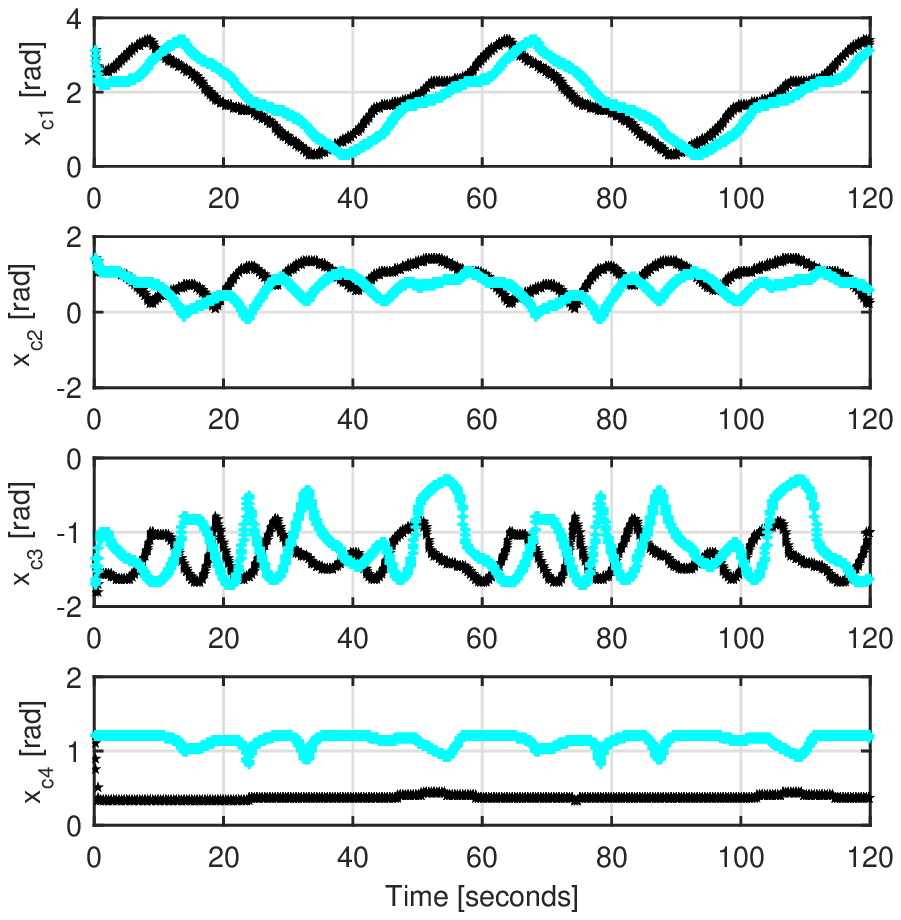}
	\caption{Joint angles.}
	\label{fig:joint_angles}
\end{figure}

Figure \ref{fig:control_effort} shows well behaved control action without chattering. 
\begin{figure}[h!]
	\centering
	\includegraphics[width=\linewidth]{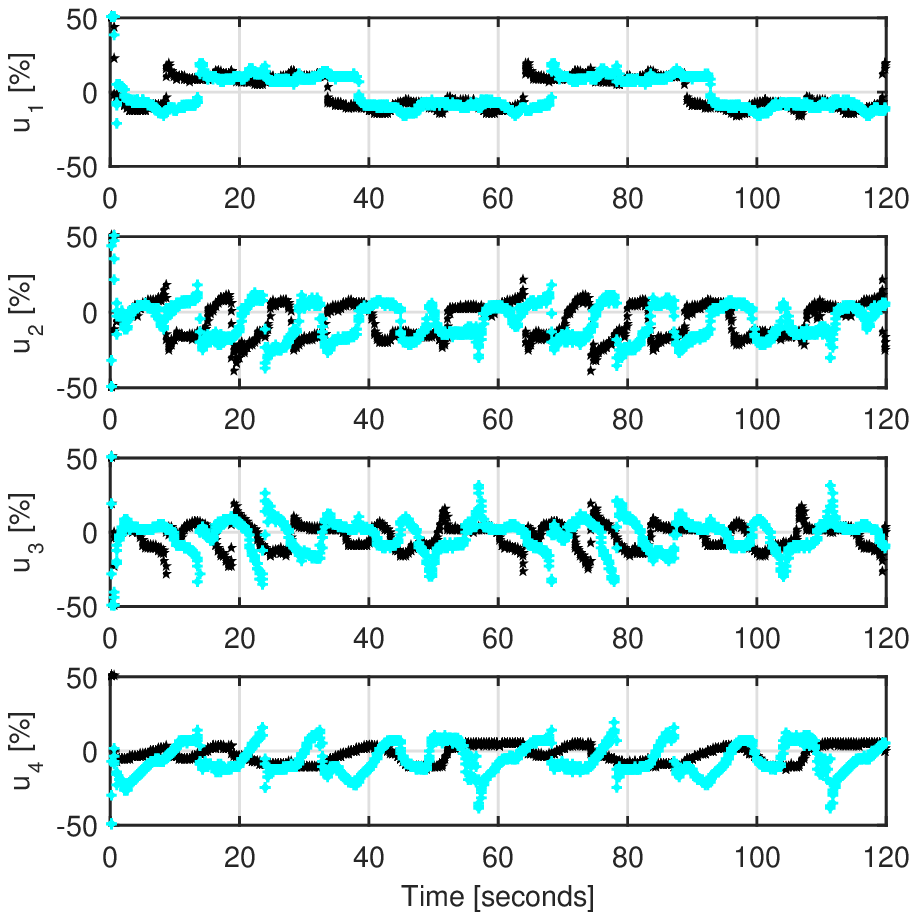}
	\caption{Control effort.}
	\label{fig:control_effort}
\end{figure}

\section{Conclusion and Future Work}
This paper proposes a method for following general paths in the form of sequences of curves; most notably are spline interpolated paths which can be made to go through arbitrary way points. The path following manifold corresponding to this path is stabilized, thus rendering the path attractive and invariant. A numerical algorithm is used to allow the use of paths which are self-intersecting. 

Furthermore, a redundancy resolution scheme is proposed based on a static optimization. A conjecture is made stating that this scheme can yield bounded zero-dynamics and, in particular, satisfy an objective of staying away from joint limits. This scheme has been validated in two analytical examples, as well as on our experimental platform.  A rigorous proof may be direction for future work. Also, a dynamic optimization may be done to account for the system dynamics in the case that the system does not have inherent damping, which may also be direction for future work.


\newpage
\begin{appendices}
\numberwithin{equation}{section}
\begingroup
\footnotesize 
\section{Numerical Optimization} \label{appendix:Numeric_opt} 
Due to the nature of numerical algorithms, the computation of $\lambda^*$ and $k^*$ for $t>0$ is done discretely. At the next time-step $t=\Delta t$ (where $\Delta t$ is sampling time of the computer) and as the output $y$ moves, the next, closest local minimum of $\norm{y-\sigma_{k^*}(\lambda)}$
is the global minimum, assuming that the output $y$ hasn't moved
far within the time step. Thus, simple numerical algorithms like
gradient descent can be used to find the nearest local minimum which
will end up being the global minimum. The idea is illustrated in 
Figure \ref{fig:numerical_optimization} below.
\begin{figure}[h!]
	\centering
	\includegraphics[width=\linewidth]{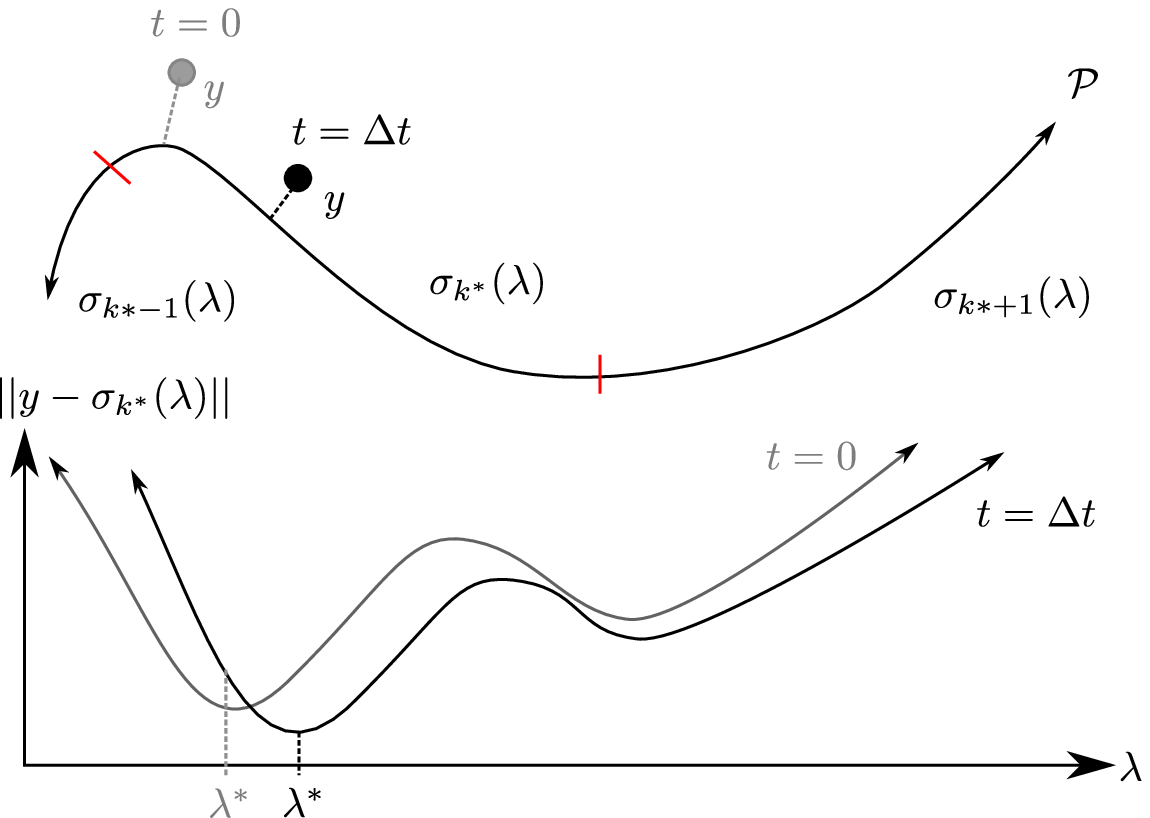}
	\caption{\footnotesize At the start of the run, the true $\lambda^*$ is known. As the output $y$ moves, at each subsequent time-step a numerical optimizer is likely to find the global minimum of $\norm{y-\sigma_{k^*}(\lambda)}$ if initialized at the previous time-step's $\lambda^*$, since the global minimum will be near. 
	}
	\label{fig:numerical_optimization}
\end{figure}

For a moment assume that the output $y$ is already on the path ($y=\sigma_{k^*}(\lambda^*)$). If the output $y$ moves too far within one time step (corresponds to a large change in the path parameter $\lambda^*$), then there may be a local minimum between the initial guess (the previous time-step's solution) and the true global minimum. Thus, we can ensure that a numerical algorithm will not get stuck at such a well by ensuring the path parameter only changes by some amount $\Delta\lambda$, which keeps the function $\norm{y-\sigma_{k^*}(\lambda)}_{y=\sigma_{k^*}(\lambda^*)}$ convex for $\lambda \in [\lambda^* - \Delta\lambda, \lambda^* + \Delta\lambda]$. If $\sigma_{k^*}$ is unit-speed parametrized, i.e., $\norm{\sigma'(\lambda)} = 1 $ for all $\lambda$, then $\Delta\lambda$ corresponds to the allowable distance along the curve that can be travelled. 

This $\Delta\lambda$ is a conservative estimate for the allowable change in parameter in one time-step, since requiring the function to be convex is sufficient, not necessary to ensure convergence of a numerical algorithm to $\lambda^*$ if initialized properly. In logic notation, $\Delta\lambda$ must satisfy
\begin{multline*}
	(\forall \lambda^* \in \mathbb{I}_{k^*})\left(\forall \lambda \in \left[\lambda^* - \Delta\lambda, \lambda^* + \Delta\lambda\right] \cap \mathbb{I}_{k^*} \right) 	\\		\frac{\mathrm{d}^2 \left(\norm{y-\sigma_{k^*}\lambda)}_{y=\sigma_{k^*}(\lambda^*)}\right)}{\mathrm{d}\lambda^2} > 0	
\end{multline*}
which is equivalent to
\begin{multline} \label{eq:critera_path_opt}  \tag{A.1}
	(\forall \lambda^* \in \mathbb{I}_{k^*})\left(\forall \lambda \in \left[\lambda^* - \Delta\lambda, \lambda^* + \Delta\lambda\right] \cap \mathbb{I}_{k^*} \right)  \\
	\left\langle \sigma_{k^*}(\lambda^*)-\sigma_{k^*}(\lambda), \sigma_{k^*}''(\lambda) \right\rangle +\\ \frac{\left\langle \sigma_{k^*}(\lambda^*)-\sigma_{k^*}(\lambda), \sigma_{k^*}'(\lambda) \right\rangle^2}{\norm{\sigma_{k^*}(\lambda^*)-\sigma_{k^*}(\lambda)}^2} < \norm{\sigma_{k^*}'(\lambda)}^2.
\end{multline}

The relation in \eqref{eq:critera_path_opt} can be used as a test to infer the allowable speed at which the output $y$ can traverse a path in order for a numerical optimizer to succeed in evaluating $\varpi_{k^*}(y) = \arginf_{\lambda \in \mathbb{I}_{k^*}}\norm{y-\sigma_{k^*}(\lambda)}$. In particular for a unit-speed parametrized curve and assuming the output is on the path $\mathcal{P}$, if \eqref{eq:critera_path_opt} holds for some $\Delta\lambda$, then the allowable speed along the curve ($\eta_2$) is $\frac{\Delta\lambda}{\Delta t}$. 

\begin{example}[Ellipse] 
  Let $\nsplines=1$ and consider the path of a single ellipse (the
  $k^*$ subscript will be dropped)
  $\sigma(\lambda) = \left(2 \cos(\lambda), \sin(\lambda)\right)$.
  When $\lambda^* = 0$, all $\lambda \in [-1.5136,1.5136]$ satisfies
  the inequality in \eqref{eq:critera_path_opt}. This means that when
  the output is at $\sigma(0)$, the change in the path parameter in
  one time step is $+/- 1.5136$. This value can be solved for
  analytically or numerically using the inequality in
  \eqref{eq:critera_path_opt}. We can run the same test for all
  $\lambda^*$ over the path and take the smallest resulting range to
  be the $\Delta\lambda$.
\end{example}

A numerical algorithm like steepest descent works for finding
$\lambda^*$ if it starts within $\Delta\lambda$ of $\lambda^*$,
however it normally takes many iterations to converge since the
algorithm takes small step sizes to the solution
\cite{bonnans2006numerical}. Another approach is to use some knowledge
about $\eta_2^{\mathrm{ref}}$ to adjust the step size. In particular,
if the path is unit-speed parametrized, one could use an initial step
size of $\eta_2^{\mathrm{ref}}\Delta t$ in the direction of the
steepest descent to quickly approach the solution. If the path is not
unit-speed parametrized, a map of $\eta_1^{\mathrm{ref}}(t)$ and
$\lambda^*$ can be constructed using \eqref{eq:eta1} sampled at each
time step, and the smallest jump in $\lambda^*$ can be used as the
initial step size. An algorithm like monotonic gradient descent with
adaptive step-size works nicely (Algorithm
\ref{Algorithm:grad_descent}) using this initial step size
\cite{marc2012lecture}.

Determining $k^*$ is just a matter of checking if the $\lambda^*$
computed by the numeric algorithm is outside the domain
$\mathbb{I}_{k^*}$. If it is, then $k^*$ must be incremented or
decremented, as done in Algorithm \ref{Algorithm:grad_descent}. Let
$\bar{\varpi}_{k^*}(\lambda) \coloneqq \norm{y-\sigma_{k^*}(\lambda)}$
be the function to be minimized. The algorithm for determining
$\lambda^*$ and $k^*$ can be found in Algorithm
\ref{Algorithm:grad_descent}.
\begin{algorithm} \footnotesize
	\caption{\footnotesize Determining $\lambda^*$ and $k^*$ at each time step.}
	\begin{algorithmic}[1] \label{Algorithm:grad_descent} 
		\REQUIRE The closest spline $k^*$ and corresponding $\lambda^*$, stepsize $\alpha$, the gradient $ \frac{\partial \bar{\varpi}_{k^*}}{\partial  \lambda}(\lambda^*)$, and an $\epsilon$ for the stopping criterion. 
		\ENSURE The closest spline $k^*$ and the $\lambda^*$ minimizing $\bar{\varpi}_{k^*}(\lambda)=\norm{y-\sigma_{k^*}(\lambda)}$.
		\STATE Initialize $\bar{\varpi}_{\lambda^*} = \bar{\varpi}_{k^*}(\lambda^*), \quad g = {\frac{\partial \bar{\varpi}_{k^*}}{\partial  \lambda}(\lambda^*)}^\top$
		\REPEAT 
		\STATE $\lambda^{*\prime} \leftarrow \lambda^* - \alpha g/\norm{g}, \qquad  \bar{\varpi}_{\lambda^{*\prime}} \leftarrow \bar{\varpi}_{k^*}(\lambda^*)$
		\IF{ $\bar{\varpi}_{\lambda^{*\prime}} < \bar{\varpi}_{\lambda^{*}}  $  }
		\STATE $ \lambda^* \leftarrow \lambda^{*\prime}, \quad  \bar{\varpi}_{\lambda^*} \leftarrow \bar{\varpi}_{\lambda^{*\prime}} $
		\STATE $ g \leftarrow  {\frac{\partial \bar{\varpi}_{k^*}}{\partial  \lambda}(\lambda^{*\prime})}^\top $
		\STATE $ \alpha \leftarrow 1.2\alpha $  \COMMENT{increase the stepsize}
		\ELSE 
		\STATE $ \alpha \leftarrow 0.5\alpha $  \COMMENT{decrease the stepsize}
		\ENDIF
		\UNTIL{ $|\lambda^{*\prime} - \lambda^*| < \epsilon$ }
		\IF{ $ \lambda^* > \max(\mathbb{I}_{k*})$  }
		\STATE $ k^* \leftarrow k^* + 1 $
		\STATE GOTO Line 1
		\ELSIF{  $ \lambda^* < \min(\mathbb{I}_{k*})$  }  
		\STATE $ k^* \leftarrow k^* - 1 $
		\STATE GOTO Line 1
		\ENDIF
	\end{algorithmic}
\end{algorithm}

Algorithm \ref{Algorithm:grad_descent} is ran at each time step in
order to calculate the coordinate transformation. In Algorithm
\ref{Algorithm:grad_descent}, lines 1 to 11 are the familiar monotonic
gradient descent algorithm with stepsize adaptation
\cite{marc2012lecture}. Lines 12 to 18 determine which spline we are
on. Depending on the path following application, if there is a known
neighbourhood of the tangential velocity, the stepsize $\alpha$ in
Algorithm \ref{Algorithm:grad_descent} can be tuned in order to
minimize the number of steps taken in the gradient descent and thus
the computational load.

\section{Supporting proofs and Derivations}
\begin{proof}[Proof of Lemma~\ref{lemma:linear_independent_diff}
  \label{appendix:proof_lemma_LIDIFF}]  Note that \eqref{eqn:lambda_star} is solved for by Algorithm \ref{Algorithm:grad_descent}, so if the output $y$ is equidistant to multiple points on the path, then a $\lambda^*$ is always well defined by the local search. 
  
   Next we write the
  differentials for each function.
	\begin{align*}
		\frac{\partial \eta_1}{\partial x} &= \begin{bmatrix}
					\frac{d s}{d \lambda}  \frac{d \varpi}{d y} J(x_c) & ,& 0_{1 \times N}
									 			\end{bmatrix}	\\
						  		 			&= \begin{bmatrix} 
					\vect{e}_1(\lambda^*)^\top J(x_c) & ,& 0_{1 \times N}
									 			\end{bmatrix}	\\
		\frac{\partial \eta_2}{\partial x} &= \begin{bmatrix}
					* & , & \vect{e}_1(\lambda^*)^\top J(x_c)
									 			\end{bmatrix}	\\
		\frac{\partial \xi_1^{j-1}}{\partial x} &= \left[
			\left( \left( h(x)-\sigma(\lambda^*) \right)^\top \left( \frac{1}{\norm{\sigma'(\lambda^*)}}\vect{e}_j'(\lambda^*)\vect{e}_1(\lambda^*)^\top \right)
			\right.\right. \\ & \qquad \qquad + \left.\left.\left( \vect{e}_j(\lambda^*)^\top \right) \right) J(x_c) \quad,  \quad 0_{1 \times N} \right]
									 			\\
		\frac{\partial \xi_2^{j-1}}{\partial x} &= \left[
			* \quad , \quad \left( \left( h(x)-\sigma(\lambda^*) \right)^\top \left( \frac{1}{\norm{\sigma'(\lambda^*)}}\vect{e}_j'(\lambda^*)\vect{e}_1(\lambda^*)^\top \right)\right.\right.  \\ & \qquad \qquad  + \left.\left.\left( \vect{e}_j(\lambda^*)^\top \right) \right) J(x_c) \right]				 										 	 			
	\end{align*}
	where $*$ denotes a vector of dimension $1 \times N$. Looking at the transversal terms, the product $\frac{1}{\norm{\sigma'(\lambda^*)}}\left( h(x)-\sigma(\lambda^*) \right)^\top \left( \vect{e}_j'(\lambda^*) \right)$ produces a scalar, call it $\alpha$. Thus the terms become  
	\begin{align*}
		\frac{\partial \xi_1^{j-1}}{\partial x} &= \begin{bmatrix}
			\left( \alpha\vect{e}_1(\lambda^*)^\top 
			+  \vect{e}_j(\lambda^*)^\top \right) J(x_c) & ,& 0_{1 \times N}
									 			\end{bmatrix}	\\
		\frac{\partial \xi_2^{j-1}}{\partial x} &= \begin{bmatrix}
			*& , & \left( \alpha\vect{e}_1(\lambda^*)^\top 
			+ \vect{e}_j(\lambda^*)^\top \right) J(x_c) 
									 			\end{bmatrix}	
	\end{align*}
	Since the domain $U$ does not include singularity points of $J(x_c)$ by the class of systems, the differentials will be linearly independent if the vectors $\vect{e}_i, i=1,...,p$ are orthogonal. These vectors are orthogonal because they are the FS orthonormal basis vectors.
\end{proof}

\begin{proof}[Derivation of \eqref{eq:eta2}]
\label{appendix:derivation_of_eta2}
By the fundamental theorem of calculus $\frac{\mathrm{d} s_{k^*}}{\mathrm{d} \lambda^*} = \norm{\sigma_{k^*}'(\lambda^*)}$. Geometric arguments provide that $\frac{\mathrm{d} \varpi_{k^*}}{\mathrm{d} y} = K_{k^*}(y)\sigma_{k^*}'(\lambda^*)^\top$ where $K_{k^*} : \Real^p \rightarrow \Real $ is a smooth scalar function \cite{consolini2010path}. To solve for $K_{k^*}(y)$, the following identity may be used:
\begin{equation*}
	\varpi_{k^*}(y) = \varpi_{k^*}(\sigma_{k^*}(\varpi_{k^*}(y))).
\end{equation*}
Differentiating both sides:
\begin{align*}
	\begin{split}
		& \frac{\mathrm{d} \varpi_{k^*}}{\mathrm{d} y} = \frac{\mathrm{d} \varpi_{k^*}}{\mathrm{d} y}\left.\sigma'(\lambda)\right|_{\lambda=\lambda^*}\frac{\mathrm{d} \varpi_{k^*}}{\mathrm{d} y} \\
		\Rightarrow & \frac{\mathrm{d} \varpi_{k^*}}{\mathrm{d} y}\left.\sigma_{k^*}'(\lambda)\right|_{\lambda=\lambda^*} = 1 \\
		\Rightarrow & \frac{\mathrm{d} \varpi_{k^*}}{\mathrm{d} y}\left.\sigma_{k^*}'(\lambda)\right|_{\lambda=\lambda^*} = \left(K_{k^*}(y)\sigma_{k^*}'(\lambda^*)^\top\right)\sigma_{k^*}'(\lambda^*) = 1 \\
	\Rightarrow & K_{k^*}(y) = \frac{1}{\norm{\sigma_{k^*}'(\lambda^*)}^2}.
	\end{split}
\end{align*}
Thus $\eta_2$ can be compactly written as \eqref{eq:eta2}.
\end{proof}

\begin{proof}[Derivation of \eqref{eq:transversal_vel}]
\label{appendix:derivation_of_xi_vel}
First note the identity $\frac{ \mathrm{d}}{ \mathrm{d} t} \vect{e}_j(\lambda^*)=\vect{e}_j'(\lambda^*)\frac{ \mathrm{d} \varpi}{ \mathrm{d} y}\dot{y}$, and $\frac{ \mathrm{d} \varpi}{ \mathrm{d} y} = \frac{\sigma'(\lambda^*)^\top}{\norm{\sigma_{k^*}'(\lambda^*)}^2}$ from above. Then, 
\begin{multline*}
	\xi_2^{j-1} \coloneqq \dot{\xi}_1^{j-1} =  \left( \vect{e}_j'(\lambda^*)
      	                              \left. \frac{ \mathrm{d} \varpi}{ \mathrm{d} y} \right|_{y=h(x)}
									  J(x_c)x_v\right)^\top (h(x) - \sigma(\lambda^*))
	\\+ \vect{e}_j(\lambda^*)^\top \left( J(x_c)x_v - \sigma'(\lambda^*) 
									  \left. \frac{ \mathrm{d} \varpi}{ \mathrm{d} y} \right|_{y=h(x)}
									  J(x_c)x_v \right)
	\\=\left( \vect{e}_j'(\lambda^*) \frac{\sigma'(\lambda^*)^\top}{\norm{\sigma'(\lambda^*)}^2}
									  J(x_c)x_v\right)^\top (h(x) - \sigma(\lambda^*))
	\\+ \vect{e}_j(\lambda^*)^\top \left( J(x_c)x_v - 
									  \vect{e}_1(\lambda^*)\vect{e}_1(\lambda^*)^\top
									  J(x_c)x_v \right)
\end{multline*}
and using the simplification that $\left\langle \vect{e}_j, \vect{e}_1 \right\rangle = 0$ for $j>1$ and $\eta_2(x) = \left\langle \vect{e}_1, J(x_c)x_v \right\rangle$ yields \eqref{eq:transversal_vel}.
\end{proof}

\begin{proof}[Generalized FS equations \cite{akivis2011elie}]
\label{appendix:generalized_FS_equations}
	\begin{multline} \label{eq:generalized_FS_equations} \tag{A.2}
		\begin{bmatrix}
		\vect{e}_1'(\lambda) \\
		\vect{e}_2'(\lambda) \\
		\vdots \\
		\vect{e}_{p-1}'(\lambda) \\
		\vect{e}_p'(\lambda)
		\end{bmatrix} = \norm{\sigma'(\lambda)} \\
		\begin{bmatrix}
		0 & \mathcal{X}_1(\lambda) & \dots & 0 & 0 \\
		-\mathcal{X}_1(\lambda) & 0 & \dots & 0 & 0 \\
		\vdots & \vdots & \ddots & \vdots & \vdots \\
		0 & 0 & \dots & 0 & \mathcal{X}_{p-1}(\lambda) \\
		0 & 0 & \dots & -\mathcal{X}_{p-1}(\lambda) & 0 
		\end{bmatrix}\begin{bmatrix}
		\vect{e}_1(\lambda) \\
		\vect{e}_2(\lambda) \\
		\vdots \\
		\vect{e}_{p-1}(\lambda) \\
		\vect{e}_p(\lambda)
		\end{bmatrix}
    \end{multline}
    where  $\mathcal{X}_i(\lambda) = \frac{\left\langle
                  \vect{e}_{i}'(\lambda), \vect{e}_{i+1}(\lambda)
                \right\rangle} {\norm{\sigma'(\lambda)}}$, $i \in \{1,...,p-1\}$.
\end{proof}

\endgroup
\end{appendices}

\bibliographystyle{IEEEtran}
\bibliography{IEEEabrv,mybib}

\end{document}